\documentclass[final, 12pt]{elsarticle}
\usepackage{physics}
\usepackage[shortlabels]{enumitem}
\usepackage{mathtools}
\usepackage{amsthm}
\usepackage[utf8]{inputenc}
\usepackage{graphicx}
\graphicspath{ {images/} }
\usepackage[english]{babel}
\usepackage[margin=1in]{geometry}
\usepackage{url}
\usepackage{amsmath, nccmath}
\usepackage{float}
\usepackage{caption}
\usepackage{comment}
\usepackage{lscape}
\usepackage{afterpage}
\captionsetup[table]{skip=2pt}
\captionsetup[figure]{skip=2pt}
\DeclareMathOperator{\orness}{orness}
\DeclareMathOperator{\andness}{andness}

\newcommand{\diff}{\,\text{d}}
\newcommand{\bslash}{\,\setminus\,}
\newcommand{\comp}{\text{co}}
\newcommand\restr[2]{{% we make the whole thing an ordinary symbol
		\left.\kern-\nulldelimiterspace % automatically resize the bar with \right
		#1 % the function
		\vphantom{\big|} % pretend it's a little taller at normal size
		\right|_{#2} % this is the delimiter
}}

\newcommand{\tnorm}{\mathcal{T}}
\newcommand{\imp}{\mathcal{I}}
\usepackage[utf8]{inputenc}
\usepackage{lmodern}
\usepackage{mathtools}
\newtheorem{thm}{Theorem}[section] 
\newtheorem{prop}[thm]{Proposition}

\newtheorem{cor}[thm]{Corollary}

\newtheorem{defn}[thm]{Definition} 
\newtheorem{exmp}[thm]{Example}
\usepackage{amssymb}

\usepackage{hyperref}
\journal{arxiv.org}

%%%%%%%%%%%%%%%%%%%%%%%
%% Elsevier bibliography styles
%%%%%%%%%%%%%%%%%%%%%%%
%% To change the style, put a % in front of the second line of the current style and
%% remove the % from the second line of the style you would like to use.
%%%%%%%%%%%%%%%%%%%%%%%

%% Numbered
%\bibliographystyle{model1-num-names}

%% Numbered without titles
%\bibliographystyle{model1a-num-names}

%% Harvard
%\bibliographystyle{model2-names.bst}\biboptions{authoryear}

%% Vancouver numbered
%\usepackage{numcompress}\bibliographystyle{model3-num-names}

%% Vancouver name/year
%\usepackage{numcompress}\bibliographystyle{model4-names}\biboptions{authoryear}

%% APA style
%\bibliographystyle{model5-names}\biboptions{authoryear}

%% AMA style
%\usepackage{numcompress}\bibliographystyle{model6-num-names}

%% `Elsevier LaTeX' style
\bibliographystyle{elsarticle-num}
%%%%%%%%%%%%%%%%%%%%%%%

\begin{document}

\begin{frontmatter}

\title{Choquet-Based Fuzzy Rough Sets}

%% or include affiliations in footnotes:
\author[mymainaddress]{Adnan Theerens\corref{mycorrespondingauthor}}
\cortext[mycorrespondingauthor]{Corresponding author}
\ead{adnan.theerens@ugent.be}
\author[mymainaddress]{Oliver Urs Lenz}
\ead{oliver.lenz@ugent.be}
\author[mymainaddress]{Chris Cornelis}
\ead{chris.cornelis@ugent.be}

\address[mymainaddress]{Computational Web Intelligence, Department of Applied Mathematics, Computer Science and Statistics, Ghent University, Ghent, Belgium}

\begin{abstract}
Fuzzy rough set theory can be used as a tool for dealing with inconsistent data when there is a gradual notion of indiscernibility between objects. It does this by providing lower and upper approximations of concepts.
In classical fuzzy rough sets, the lower and upper approximations are determined using the minimum and maximum operators, respectively. This is undesirable for machine learning applications, since it makes these approximations sensitive to outlying samples. To mitigate this problem, ordered weighted average (OWA) based fuzzy rough sets were introduced. In this paper, we show how the OWA-based approach can be interpreted intuitively in terms of vague quantification, and then generalize it to Choquet-based fuzzy rough sets (CFRS). This generalization maintains desirable theoretical properties, such as duality and monotonicity. Furthermore, it provides more flexibility for machine learning applications. In particular, we show that it enables the seamless integration of outlier detection algorithms, to enhance the robustness of machine learning algorithms based on fuzzy rough sets.
\end{abstract}
\begin{keyword}
Fuzzy rough sets \sep Non-additive measures \sep Choquet integral \sep Machine learning \sep Outlier detection
\end{keyword}

\end{frontmatter}

\section{Introduction}
Rough set theory, introduced by Pawlak \cite{pawlak1982rough}, provides a lower and upper approximation of a concept with respect to the indiscernibility relation between objects. The lower and upper approximation contain all objects that are certainly, resp.\ possibly part of the concept. That is to say, an element is a member of the lower approximation of a concept if every element indiscernible from it belongs to the concept; and an element is a member of the upper approximation of the concept if there exists an element indiscernible from it that belongs to the concept. Rough set theory was first extended to fuzzy rough set theory by Dubois and Prade \cite{dubois1990rough}, here both the concept and the indiscernibility relation can be fuzzy. Fuzzy rough set theory has been used successfully for classification and other machine learning purposes, such as feature and instance selection \cite{vluymans2015applications}, but due to the fact that the approximations in classical fuzzy rough sets are determined using the minimum and maximum operators, these approximations (and the applications based on them) are sensitive to noisy and outlying samples. To mitigate this problem, many noise-tolerant versions of fuzzy rough sets (FRS) have been proposed, such as Vaguely Quantified FRS \cite{cornelis2007vaguely}, \(\beta\)-Precision FRS \cite{FERNANDEZSALIDO2003547,FERNANDEZSALIDO2}, Variable Precision FRS \cite{mieszkowicz2004variable}, Variable Precision \((\theta,\sigma)\)-FRS \cite{YAO201458}, Soft Fuzzy Rough Sets \cite{HU20104384}, Automatic Noisy Sample Detection FRS \cite{HADRANI202037}, Data-Distribution-Aware FRS \cite{data-distribution-awareFR}, Probability Granular Distance based FRS \cite{PGDFRS} and Ordered Weighted Averaging (OWA) based FRS \cite{cornelis2010ordered}. \\
The Choquet integral, which is commonly used in decision making \cite{grabisch2010decade}, is a generalization of the Lebesgue integral to non-additive measures. It induces an interesting class of aggregation operators, that contains the weighted mean and OWA operators as special cases.\\
In this paper, we introduce a generalization of OWA-based fuzzy rough sets (OWAFRS), called Choquet-based fuzzy rough sets (CFRS), that uses the Choquet integral to determine the lower and upper approximation of a concept. This adds extra flexibility for machine learning purposes, while still retaining the important theoretical properties that OWAFRS has, such as monotonicity w.r.t.\ the indiscernibility relation and duality. We also show how OWAFRS can be interpreted in terms of vague quantification. Furthermore, we explain and demonstrate how to combine CFRS and normalized outlier scores \cite{normalizedLOF} to boost the robustness of the lower and upper approximations in fuzzy rough sets.
\\This paper is structured as follows: in Section \ref{prelims}, we recall the required prerequisites for (OWA-based) fuzzy rough sets and Choquet integration, while Section \ref{sec: vague quantification} discusses vague quantification. Section \ref{choq-basedFuzzyRoughSets_section} combines the previous sections to link OWAFRS with vague quantification and introduces CFRS together with several non-symmetric monotone measures that can be used with CFRS. These measures have a concrete interpretation in terms of vague quantification and let us smoothly combine outlier detection algorithms and fuzzy rough sets, moreover they cannot be realised using the OWAFRS approach. In Section \ref{section:application}, these measures are applied and evaluated for classification. Section \ref{section: conclusion and future work} concludes this paper and describes opportunities for future research.
\section{Preliminaries}
\label{prelims}
\subsection{Fuzzy set theory}
In this subsection, we recall the necessary notions of fuzzy set and fuzzy logical connectives. We start with the definition of a fuzzy set and a fuzzy relation.
\begin{defn}{\cite{fuzzysetsss}}
	A \emph{fuzzy set} or \emph{membership function} \(A\) on \(X\) is a function from \(X\) to the unit interval, i.e.\ \(A:X\to [0,1]\). The value \(A(x)\) of an element \(x\in X\) is called the \emph{degree of membership} of \(x\) in the fuzzy set \(A\). The set of all fuzzy sets on \(X\) is denoted as \(\mathcal{F}(X)\). A fuzzy relation \(R\) on \(X\) is an element of \(\mathcal{F}(X\times X)\).
\end{defn}
\begin{defn}{\cite{fuzzysetsss}}
The notation \(A\subseteq B\) for two fuzzy sets \(A\) and \(B\), expresses that \(A(x)\leq B(x)\) for all \(x\in X\). The fuzzy set \(A\cap B \in \mathcal{F}(X)\) is defined by \((A\cap B)(x)=\min(A(x),B(x))\).
\end{defn}
We will also make use of conjunctors, implicators and negators which extend their Boolean counterparts to the fuzzy setting.
\begin{defn}
	\hfill
\begin{itemize}
\item A function \(\mathcal{C}:[0,1]^2\to [0,1]\) is called a \emph{conjunctor} if it is increasing in both arguments and satisfies \(\mathcal{C}(0,0)=\mathcal{C}(1,0)=\mathcal{C}(0,1)=0\), \(\mathcal{C}(1,1)=1\) and \(\mathcal{C}(1,x)=x\) for all \(x\in[0,1]\). A commutative and associative conjunctor \(\tnorm\) is called a \emph{t-norm}.
\item A function $\imp: \left[0,1\right]^2\rightarrow \left[0,1\right]$ is called an \emph{implicator} if $\imp(0,0)=\imp(0,1)=\imp(1,1)=1$, \(\imp(1,0)=0\) and for all $x_1,x_2,y_1,y_2$ in $ \left[0,1\right]$ the following holds:
\begin{enumerate}
	\item $x_1\leq x_2\Rightarrow \imp(x_1,y_1)\geq \imp(x_2,y_1)$ (decreasing in the first argument),
	\item $y_1\leq y_2\Rightarrow \imp(x_1,y_1)\leq \imp(x_1,y_2)$ (increasing in the second argument),
\end{enumerate}
\item A function \(\mathcal{N}:[0,1]\to [0,1]\) is called a \emph{negator} if it is non-increasing and satisfies \(\mathcal{N}(0)=1\) and \(\mathcal{N}(1)=0\).
\item The \emph{induced conjunctor} of an implicator \(\mathcal{I}\) and negator \(\mathcal{N}\) is defined by:
\[\mathcal{C}_{\mathcal{I},\mathcal{N}}(x,y):=\mathcal{N}\left(\imp(x,\mathcal{N}(y))\right), \;\;\forall x,y\in [0,1].\]
\end{itemize}
\end{defn}
Since t-norms are required to be associative, they can be extended naturally to a function \([0,1]^n\to [0,1]\) for any natural number \(n\geq 2\).
\begin{exmp}
	\hfill
\begin{itemize}
\item The minimum and product operators are t-norms: \(\mathcal{T}_{M}(x,y):=\min(x,y)\) and \(\mathcal{T}_{P}(x,y):=x\cdot y\).
\item The Kleene-Dienes, Reichenbach and \L ukasiewicz implicators are defined by \(\mathcal{I}_{KD}(x,y):=\max(1-x,y)\), \(\imp_{R}(x,y):=1-x +x\cdot y\) and \(\imp_{L}(x,y):= \min(1-x+y,1)\).
\item The standard negator \(\mathcal{N}_s\) is defined by \(\mathcal{N}_s(x):= 1-x\) for \(x\in[0,1]\).
\item The induced conjunctor of the Kleene-Dienes implicator and the standard negator is the minimum t-norm.
\end{itemize}
\end{exmp}
\begin{defn}
Given a negator $\mathcal{N}$, the \(\mathcal{N}\)-complement of a fuzzy set is given by:
\[\comp_{\mathcal{N}}(A)(x)= \mathcal{N}(A(x)), \;\;\forall x\in X\]
\end{defn}
\subsection{Fuzzy rough sets}
Rough sets, first introduced by Pawlak \cite{pawlak1982rough}, try to model uncertainty that is associated with \emph{indiscernibility}. Here indiscernibility is defined with respect to an equivalence relation, and two elements are called \emph{indiscernible} if they are in the same equivalence class. Indiscernibility arises naturally in information systems.
\begin{defn}
An \emph{information system} $\left( X, \mathcal{A}\right)$, consists of a finite non-empty set \(X\) and a non-empty family of attributes $\mathcal{A}$, where each attribute \(a\in \mathcal{A}\) is a function $a:\ X \rightarrow V_a$, with $V_a$ the set of values the attribute $a$ can take.
A \emph{decision system} is an information system $\left( X, \mathcal{A}\cup \{d\}\right)$, where \(d\notin \mathcal{A}\) is called the \emph{decision attribute} and each \(a\in\mathcal{A}\) is called a \emph{conditional attribute}.
\end{defn}
\begin{defn}[$B$-indiscernibility]
	Let $B$ be a subset of $\mathcal{A}$, then the \emph{$B$-indiscernibility} relation is given by
	\begin{equation*}
		R_B = \left\{\left.\left(x,y\right)\in X^2\right| \forall a\in B, a(x) = a(y)\right\}.
	\end{equation*}
	If $\left(x,y\right)\in R_B$ then $x$ and $y$ are said to be \emph{indiscernible} with respect to $B$. The equivalence class of \(x\) is denoted by $\left[x\right]_B$.
\end{defn}
\begin{defn}\cite{pawlak1982rough}
\label{defintionRoughset}
Let $A$ be a subset of $X$ and \(R\) an equivalence relation on \(X\). The \emph{lower} and \emph{upper approximations} of $A$ with respect to \(R\) are defined as:
\begin{align*}
\underline{apr}_{R} A &= \left\{\left.x\in X\right| \left[x\right]_R \subseteq A \right\}=\left\{\left.x\in X\right| (\forall y\in X) \left((x,y)\in R \implies y\in A\right)\right\}\\
\overline{apr}_{R} A &= \left\{\left.x\in X\right| \left[x\right]_R \cap A \neq \emptyset \right\}=\left\{\left.x\in X\right| (\exists y\in X) \left((x,y)\in R \land y\in A\right)\right\},
\end{align*}
where \([x]_R:=\{y\in X | (x,y)\in R\}\) denotes the equivalence class of \(x\) with respect to \(R\).
The pair $\left(\underline{apr}_{R} A,\overline{apr}_{R} A\right)$ is called a \emph{rough} set. 
\end{defn}
If an element \(x\) is in the lower approximation of \(A\), then we know that all elements that are indiscernible from \(x\) are also in \(A\). When an element \(x\) is in the upper approximation of \(A\), we can only say that there exists some element that is indiscernible from \(x\) and belongs to \(A\). \\
For fuzzy sets and fuzzy relations, the lower and upper approximations can be extended as follows:
\begin{defn}{\cite{radzikowska2002comparative}}
	Given $R\in \mathcal{F}(X\times X)$ and $A\in\mathcal{F}(X)$, the \emph{lower} and \emph{upper approximation} of $A$ w.r.t.\ $R$ are defined as:
	\begin{align}
	(\underline{\text{apr}}_{R} A)(x) &= \min\limits_{y\in X} \imp(R(x,y),A(y)), \label{lower approx}\\
	(\overline{\text{apr}}_{R} A)(x) &= \max\limits_{y\in X} \mathcal{C}(R(x,y),A(y)),\label{upper approx}
	\end{align}
	where $\imp$ is an implicator and $\mathcal{C}$ a conjunctor.
\end{defn}
\subsection{OWA-based fuzzy rough sets}
A downside to the classical definition of lower and upper approximation in fuzzy rough set theory is their lack of robustness. The value of the membership of an element in the lower and upper approximation is fully determined by a single element because of the minimum and maximum operators in the definition. To solve this undesirable behaviour, OWA-based fuzzy rough sets were introduced in \cite{cornelis2010ordered}. The Ordered Weighted Average \cite{yagerOWA} is an aggregation operator that is defined as follows:
\begin{defn}[OWA operator]
	\label{OWA}
	Let \(X=\{x_1,x_2,\dots,x_n\}\), \(f:X\to \mathbb{R}\) and \(\mathbf{w}=(w_1,w_2,\dots,w_n)\in [0,1]^n\) be a weighting vector, i.e.\ \(\sum_{i=1}^{n}w_i=1\), then the \emph{ordered weighted average} of \(f\) with respect to \(\mathbf{w}\) is defined as
	\[\text{OWA}_{\mathbf{w}}(f):=\sum_{i=1}^n f(x_{\sigma(i)})w_i,\]
	where $\sigma$ is a permutation of \(\{1,2,\dots,n\}\) such that
	\begin{equation*}
	f(x_{\sigma(1)})\geq f(x_{\sigma(2)}) \geq\cdots\geq f(x_{\sigma(n)}).
	\end{equation*}
\end{defn}
\begin{exmp}
	\label{OWAminMaxAverage}
	The maximum, mean and minimum operators can all be seen as OWA-operators with weight vectors \((1,0,\dots,0,0)\), \(\left(\frac{1}{n},\frac{1}{n},\dots,\frac{1}{n}\right)\) and \((0,0,\dots,0,1)\) respectively.
\end{exmp}
In OWA-based fuzzy rough sets, OWA operators replace the minimum and maximum in equations (\ref{lower approx}) and (\ref{upper approx}). To not deviate too strongly from the original definitions, some requirements may be enforced on the weight vectors of the OWA-operators used \cite{cornelis2010ordered}. In particular, we require that the OWA-operator for the lower approximation is a soft minimum and for the upper approximation a soft maximum.
\begin{defn}
	\label{ornessOWA}
	The \emph{orness} and \emph{andness} of a weight vector \(\mathbf{w}=(w_i)_{i=1}^n\) are defined as
	\begin{align}
	\orness(\mathbf{w})&=\frac{1}{n-1}\sum_{i=1}^{n}((n-i)\cdot w_i), \label{orness}\\
	\andness(\mathbf{w})&= 1- \orness(\mathbf{w}).\nonumber
	\end{align}
	If \(\orness(\mathbf{w})<0.5\), then \(OWA_\mathbf{w}\) is called a \emph{soft minimum}. If \(\orness(\mathbf{w})>0.5\), \(OWA_\mathbf{w}\) is called a \emph{soft maximum}.
\end{defn}
As can be seen from Equation (\ref{orness}), the orness indicates how much weight is given to the largest elements. The orness tells us how ``close'' the OWA-operator is to the maximum. Using this definition OWA-based fuzzy rough sets are then defined as:
\begin{defn} \cite{cornelis2010ordered}
	\label{owafuzzyrough}
	Given $R\in\mathcal{F}(X\times X)$, weight vectors \(\mathbf{w}_l\) and \(\mathbf{w}_u\) with \(\orness(\mathbf{w}_l)<0.5\) and \(\orness(\mathbf{w}_u)>0.5\) and \(A\in\mathcal{F}(X)\), the \emph{OWA lower} and \emph{upper approximation} of $A$ w.r.t.\ $R$, $\mathbf{w}_l$ and \(\mathbf{w}_u\) are given by:
	\begin{align}
	(\underline{\text{apr}}_{R,\mathbf{w}_l}A)(x)&= OWA_{\mathbf{w}_l}\left( \imp(R(x,y),A(y))\right),\\
	(\overline{\text{apr}}_{R,\mathbf{w}_u}A)(x) &= OWA_{\mathbf{w}_u}\left( \mathcal{C}(R(x,y),A(y))\right),
	\end{align}
	where $\imp$ is an implicator, $\mathcal{C}$ a conjunctor and \(\imp(R(x,y),A(y))\) and \(\mathcal{C}(R(x,y),A(y))\) are seen as functions in \(y\).
\end{defn}
\subsection{The Choquet integral}
The Choquet integral induces a large class of aggregation functions, namely the class of all comonotone linear aggregation functions \cite{beliakov2007aggregation}.
Since we will view the Choquet integral as an aggregation operator, we will restrict ourselves to measures (and Choquet integrals) on finite sets. For the general setting, we refer the reader to e.g.\ \cite{wang2010generalized}.
\begin{defn}
	A function \(\mu:\mathcal{P}(X)\to[0,1]\) is called a \emph{monotone measure} if:
	\begin{itemize}
		\item $\mu(\emptyset)=0$ and \(\mu(X)=1\)
		\item \((\forall A,B\in(\mathcal{P}(X))(A\subseteq B \implies \mu(A)\leq \mu(B))\)
	\end{itemize}
	A monotone measure is called:
	\begin{itemize}
		\item  \emph{additive} if \(\mu(A\cup B)=\mu(A)+\mu(B)\) when \(A\) and \(B\) are disjoint

		\item \emph{symmetric} if \(\mu(A)=\mu(B)\) when \(\abs{A}=\abs{B}\)
	\end{itemize}
\end{defn}
\begin{defn}\cite{wang2010generalized}
	\label{defn: ChoquetIntegral}
	Let $\mu$ be a monotone measure on \(X\) and \(f:X\to\mathbb{R}\) a real-valued function. The \emph{Choquet integral} of \(f\) with respect to the measure $\mu$ is defined as:
	\begin{equation*}
	\int f \diff \mu=\sum_{i=1}^{n}\mu(A^\ast_i)\cdot\left[f(x^\ast_i)-f(x^\ast_{i-1})\right],
	\end{equation*}
	where \((x^\ast_1,x^\ast_2,\dots,x^\ast_n)\) is a permutation of \(X=(x_1,x_2,\dots,x_n)\) such that
	\begin{equation*}
	f(x^\ast_1)\leq f(x^\ast_2) \leq\cdots\leq f(x^\ast_n),
	\end{equation*}
	\(A^\ast_i:=\{x^\ast_i,\dots,x^\ast_n\}\) and \(f(x^\ast_0):=0\).
\end{defn}
The following proposition gives an equivalent definition of the Choquet integral:
\begin{prop}\cite{wang2010generalized}
Let $\mu$ be a monotone measure on \(X\), \(f:X\to\mathbb{R}\) a real-valued function. Then the following holds (using the notation of Definition \ref{defn: ChoquetIntegral}):
\begin{equation*}
\int f \diff \mu=\sum_{i=1}^n f(x^\ast_i)\cdot\left[\mu(A^\ast_i)-\mu(A^\ast_{i+1})\right],
\end{equation*}
where \(\mu(A^\ast_{n+1}):=0\).
\end{prop}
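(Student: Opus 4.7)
The proof is a textbook application of Abel summation (summation by parts), so the plan is essentially bookkeeping rather than conceptual. The idea is to start with the defining expression
\[
\int f \diff \mu=\sum_{i=1}^{n}\mu(A^\ast_i)\cdot\left[f(x^\ast_i)-f(x^\ast_{i-1})\right],
\]
split it into two sums, and reindex the one involving $f(x^\ast_{i-1})$ so that both sums run over the same values of $f(x^\ast_i)$, leaving $\mu$-differences as coefficients.

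Concretely, I would first write
\[
\int f\diff \mu = \sum_{i=1}^{n}\mu(A^\ast_i)f(x^\ast_i) \;-\; \sum_{i=1}^{n}\mu(A^\ast_i)f(x^\ast_{i-1}),
\]
and then shift the index in the second sum by setting $j=i-1$, obtaining $\sum_{j=0}^{n-1}\mu(A^\ast_{j+1})f(x^\ast_j)$. The $j=0$ term vanishes thanks to the convention $f(x^\ast_0)=0$, so after renaming $j$ back to $i$ this becomes $\sum_{i=1}^{n-1}\mu(A^\ast_{i+1})f(x^\ast_i)$. Using the convention $\mu(A^\ast_{n+1})=0$, I can harmlessly extend this second sum up to $i=n$, giving $\sum_{i=1}^{n}\mu(A^\ast_{i+1})f(x^\ast_i)$.

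Subtracting and factoring $f(x^\ast_i)$ then yields
\[
\int f\diff \mu = \sum_{i=1}^{n} f(x^\ast_i)\bigl[\mu(A^\ast_i)-\mu(A^\ast_{i+1})\bigr],
\]
which is exactly the claimed identity.

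There is no real obstacle in this proof; the only points to be careful about are (i) making sure the two boundary conventions $f(x^\ast_0)=0$ and $\mu(A^\ast_{n+1})=0$ are invoked at the right moment to absorb the endpoint terms produced by the reindexing, and (ii) keeping the ordering permutation $\sigma$ fixed throughout so that the nested sets $A^\ast_i\supseteq A^\ast_{i+1}$ and their measures match up correctly. Monotonicity of $\mu$ is not needed for the equality itself; it is only needed if one wants to observe that the coefficients $\mu(A^\ast_i)-\mu(A^\ast_{i+1})$ are non-negative, but that is a side remark rather than part of the proof.
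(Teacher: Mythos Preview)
Your argument is correct: the Abel summation / reindexing you describe is exactly the standard derivation, and your handling of the two boundary conventions is accurate. Note, however, that the paper does not actually supply a proof of this proposition --- it is quoted from \cite{wang2010generalized} without argument --- so there is no in-paper proof to compare against; your write-up simply fills in what the reference would contain.
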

\begin{cor}{\cite{wang2010generalized}}
\label{cor:equivalentChoquet}
The following equality holds for every monotone measure \(\mu\), constant \(c\in\mathbb{R}\) and real-valued function \(f:X\to \mathbb{R}\):
\[\int(c+f)\diff \mu= c + \int f \diff \mu.\]
\end{cor}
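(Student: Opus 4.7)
The plan is to invoke the \emph{second} (equivalent) formulation of the Choquet integral given in the proposition immediately preceding the corollary, because in that form the function values appear as linear coefficients rather than inside differences, which makes the shift by a constant transparent. The key starting observation is that $c+f$ has exactly the same ordering as $f$, so the permutation $\sigma$ from Definition \ref{defn: ChoquetIntegral} and the associated descending chain $A^\ast_1 \supseteq A^\ast_2 \supseteq \cdots \supseteq A^\ast_n$ can be reused verbatim for both $\int f\diff\mu$ and $\int(c+f)\diff\mu$.

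With that permutation fixed, substituting $c+f$ into the second-form identity and distributing yields
\[
\int (c+f)\diff\mu = c\sum_{i=1}^{n}\bigl[\mu(A^\ast_i) - \mu(A^\ast_{i+1})\bigr] + \sum_{i=1}^{n} f(x^\ast_i)\bigl[\mu(A^\ast_i)-\mu(A^\ast_{i+1})\bigr].
\]
The second sum is exactly $\int f\diff\mu$ by the proposition. The first sum telescopes to $\mu(A^\ast_1) - \mu(A^\ast_{n+1}) = \mu(X) - 0 = 1$, where we use $A^\ast_1 = X$, the convention $\mu(A^\ast_{n+1}) := 0$, and the normalization $\mu(X)=1$ built into the definition of a monotone measure. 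Multiplying by $c$ and adding the two pieces gives the claimed equality.

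The main (mild) subtlety is to resist using the original Definition \ref{defn: ChoquetIntegral} formulation, in which the convention $f(x^\ast_0):=0$ interacts awkwardly with the shift: one would have to argue that $(c+f)(x^\ast_0)$ should still be taken as $0$ (not $c$) and then absorb a compensating term, so the bookkeeping becomes longer than it needs to be. Routing the proof through the second form converts the statement into the two-line telescoping argument above and leaves no real obstacle.
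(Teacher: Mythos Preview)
Your argument is correct. Note that the paper does not actually supply a proof of this corollary: it is stated with a citation to \cite{wang2010generalized} and left to follow from the preceding proposition giving the alternative form of the Choquet integral. Your derivation is precisely the intended one-line consequence of that proposition---same ordering for $c+f$ as for $f$, distribute, telescope using $\mu(A^\ast_1)=\mu(X)=1$ and $\mu(A^\ast_{n+1})=0$---so there is nothing to compare and nothing to fix.
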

The class of aggregation operators induced by the Choquet integral contains the weighted mean and the OWA operator. In fact, the weighted mean and OWA operator are the Choquet integrals with respect to additive and symmetric measures, respectively.
\begin{prop}{\cite{beliakov2007aggregation}}
	The Choquet integral with respect to an additive measure \(\mu\) is the weighted mean \(M_\mathbf{w}\) with weight vector \(\mathbf{w}=(w_i)_{i=1}^n=(\mu(\{x_i\}))_{i=1}^n\). Conversely, the weighted mean \(M_\mathbf{v}\), with weight vector \(\mathbf{v}=(v_i)_{i=1}^n\) is a Choquet integral with respect to the uniquely defined additive measure \(\mu\) for which \((\mu(\{x_i\}))_{i=1}^n=(v_i)_{i=1}^n\).
\end{prop}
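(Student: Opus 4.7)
The plan is to work from the alternative formula for the Choquet integral given in the proposition immediately preceding the statement, namely
\[
\int f \diff \mu = \sum_{i=1}^{n} f(x^\ast_i)\cdot\bigl[\mu(A^\ast_i)-\mu(A^\ast_{i+1})\bigr],
\]
and exploit additivity to collapse the difference of measures into singleton values. Since $A^\ast_i = \{x^\ast_i\}\cup A^\ast_{i+1}$ with the union disjoint, additivity yields $\mu(A^\ast_i)-\mu(A^\ast_{i+1}) = \mu(\{x^\ast_i\})$. Substituting gives $\int f \diff \mu = \sum_{i=1}^{n} f(x^\ast_i)\cdot \mu(\{x^\ast_i\})$, and because this is a plain sum (no longer sensitive to the ordering induced by $f$), it can be re-indexed to $\sum_{i=1}^{n} f(x_i)\cdot \mu(\{x_i\})$, which is exactly $M_{\mathbf{w}}(f)$ with $w_i=\mu(\{x_i\})$. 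I would briefly remark that $\mathbf{w}$ is a legitimate weight vector: $w_i\in[0,1]$ by monotonicity of $\mu$, and $\sum_i w_i = \mu(X)=1$ by iterating additivity together with the normalisation $\mu(X)=1$ from the definition of a monotone measure.

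For the converse direction, given a weight vector $\mathbf{v}=(v_i)_{i=1}^n$, I would define $\mu$ on $\mathcal{P}(X)$ by $\mu(A):=\sum_{x_i\in A} v_i$ and verify that $\mu$ is a monotone additive measure with $\mu(\{x_i\})=v_i$ and $\mu(X)=1$. Uniqueness follows immediately: any additive measure $\mu'$ on the finite power set satisfies $\mu'(A)=\sum_{x_i\in A}\mu'(\{x_i\})$, so prescribing the singleton values fixes $\mu'$ on every subset. Applying the first part of the proof to this $\mu$ then recovers $M_{\mathbf{v}}$ as the Choquet integral $\int\,\cdot\,\diff\mu$.

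There is no real obstacle here; the only subtlety worth flagging is that one must not mistake the ordering-dependence of the Choquet integral's defining formula for ordering-dependence of the result — the point is precisely that for additive $\mu$ the telescoping differences depend only on the singleton measures, so the permutation $\sigma$ in Definition \ref{defn: ChoquetIntegral} becomes irrelevant and the expression reduces to an unordered weighted sum. I would present the calculation in a single short display, then dispatch the converse and uniqueness in one paragraph.
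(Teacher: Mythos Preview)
Your argument is correct and is the standard one: additivity collapses $\mu(A^\ast_i)-\mu(A^\ast_{i+1})$ to $\mu(\{x^\ast_i\})$, after which the ordering drops out and you recover the weighted mean; the converse and uniqueness are handled cleanly. Note, however, that the paper does not actually supply a proof of this proposition---it is stated as a cited result from \cite{beliakov2007aggregation} with no accompanying argument---so there is nothing in the paper to compare your approach against.
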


\begin{prop}{\cite{beliakov2007aggregation}}
	\label{ChoquetOWA}
	The Choquet integral with respect to a symmetric measure \(\mu\) is the OWA operator with weight vector \(\mathbf{w}=(w_i)_{i=1}^n=(\mu(A_{i})-\mu(A_{i-1}))_{i=1}^n\), where \(A_i\) denotes any subset with cardinality \(i\). Conversely, the OWA operator with weight vector \(\mathbf{v}=(v_i)_{i=1}^n\) is a Choquet integral with respect to the symmetric measure \(\mu\) defined as \[(\forall A\subseteq X)(\mu(A):= \sum_{i=1}^{\abs{A}}v_i).\]
\end{prop}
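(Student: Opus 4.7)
The plan is to use the alternate form of the Choquet integral from the preceding proposition and show that under symmetry it collapses to the OWA expression. Starting from
\[\int f \diff \mu = \sum_{i=1}^n f(x^\ast_i)\cdot[\mu(A^\ast_i)-\mu(A^\ast_{i+1})],\]
with $f(x^\ast_1)\le \cdots \le f(x^\ast_n)$, I would first introduce the shorthand $m_k := \mu(A)$ for any $A$ with $\abs{A}=k$, which is well defined precisely because $\mu$ is symmetric. Since $\abs{A^\ast_i}=n-i+1$, this yields $\mu(A^\ast_i)-\mu(A^\ast_{i+1})= m_{n-i+1}-m_{n-i}$, so the integral becomes a sum involving only the values $m_1,\dots,m_n$ and the sorted function values.

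The next step is to bridge the ascending convention of the Choquet integral with the descending convention used in Definition \ref{OWA}. Reindexing via $j = n-i+1$ rewrites the sum as $\sum_{j=1}^n f(x^\ast_{n-j+1})\,(m_j - m_{j-1})$, and since $x^\ast_{n-j+1}$ is the $j$-th element in decreasing order of $f$, it coincides with $f(x_{\sigma(j)})$ for the permutation $\sigma$ from Definition \ref{OWA}. Thus the integral equals $\sum_{j=1}^n f(x_{\sigma(j)})\,w_j$ with $w_j := m_j - m_{j-1} = \mu(A_j)-\mu(A_{j-1})$. A short telescoping check shows $\sum_{j=1}^n w_j = \mu(X)-\mu(\emptyset)=1$, and $w_j\ge 0$ follows from the monotonicity of $\mu$, so $\mathbf{w}$ is a genuine weighting vector and the first claim is established.

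For the converse, given a weighting vector $\mathbf{v}$, I would verify that $\mu(A):=\sum_{i=1}^{\abs{A}}v_i$ is a monotone measure: the boundary conditions $\mu(\emptyset)=0$ and $\mu(X)=\sum_{i=1}^n v_i=1$ are immediate, and monotonicity holds since each $v_i\ge 0$. Symmetry is built into the definition, so the forward direction applies and yields the OWA operator with weights $(\mu(A_j)-\mu(A_{j-1}))_{j=1}^n=(v_j)_{j=1}^n$, as required. Uniqueness of the symmetric measure associated with $\mathbf{v}$ is immediate because a symmetric measure is determined by its values on cardinalities, and those values are forced by the telescoping identity $v_j = \mu(A_j)-\mu(A_{j-1})$.

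The substance of the argument is genuinely just the bookkeeping behind the index reversal between the ascending permutation used by the Choquet integral and the descending permutation used by OWA; once that substitution is set up carefully, the remaining steps are a telescoping sum and straightforward verifications of the measure axioms.
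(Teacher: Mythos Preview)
Your argument is correct. The paper does not supply its own proof of this proposition; it is quoted from \cite{beliakov2007aggregation} and stated without proof, so there is nothing to compare against beyond noting that your derivation is exactly the standard one: use the alternate summation form of the Choquet integral, exploit symmetry to replace $\mu(A^\ast_i)$ by a function of cardinality alone, and reindex to match the descending order in Definition~\ref{OWA}.
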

Lastly we recall the definition of a dual measure and take a look at how this concept translates to the case of OWA operators.
\begin{defn}{\cite{wang2010generalized}}
	The dual measure $\overline{\mu}$ of a monotone measure $\mu$ is defined by:
	\[\overline{\mu}(A)=\mathcal{N}_s(\mu(\comp_{\mathcal{N}_s} A)).\]
\end{defn}
\begin{prop}{\cite{wang2010generalized}}
	\label{lemmaDuality}
	Let $\mu$ be a monotone measure on \(X\) and \(f\) a real-valued function on \(X\). Then
	\[\int f \diff \overline{\mu}= \mathcal{N}_s\left(\int \mathcal{N}_s(f)\diff \mu\right)\]
\end{prop}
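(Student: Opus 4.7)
The plan is to expand both sides using the alternative form of the Choquet integral from the preceding proposition and track how the increasing reordering of $f$ transforms when $f$ is replaced by $\mathcal{N}_s(f) = 1 - f$. The key observation is that sorting $1-f$ increasingly is the reverse of sorting $f$ increasingly, so the chain of accumulation sets for $1-f$ consists precisely of the complements of those for $f$; combined with $\overline{\mu}(A) = 1 - \mu(X\setminus A)$, this pairs up the two sides termwise.

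Concretely, first I would fix a permutation $(x^*_1,\ldots,x^*_n)$ with $f(x^*_1) \leq \cdots \leq f(x^*_n)$ and set $A^*_i = \{x^*_i,\ldots,x^*_n\}$. The preceding proposition then gives
\[ \int f \diff \overline{\mu} \;=\; \sum_{i=1}^n f(x^*_i)\big[\overline{\mu}(A^*_i) - \overline{\mu}(A^*_{i+1})\big]. \]
Substituting $\overline{\mu}(A^*_i) = 1 - \mu(X\setminus A^*_i)$ makes the constant $1$'s cancel inside each bracket, leaving a sum weighted by the $\mu$-increments $\mu(X\setminus A^*_{i+1}) - \mu(X\setminus A^*_i)$ along the growing chain $\emptyset = X\setminus A^*_1 \subsetneq X\setminus A^*_2 \subsetneq \cdots \subsetneq X\setminus A^*_{n+1} = X$.

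Next I would compute $\int \mathcal{N}_s(f)\diff\mu$. The reversed permutation $x^{**}_i := x^*_{n-i+1}$ sorts $1-f$ increasingly, and a direct check shows that its associated accumulation sets are precisely $X\setminus A^*_{n-i+2}$. After reindexing $j = n-i+1$ and applying the alternative formula from the preceding proposition once more, the resulting sum splits into a constant part that telescopes to $\mu(X) - \mu(\emptyset) = 1$ and an $f$-part equal to $-\int f \diff \overline{\mu}$. Thus $\int \mathcal{N}_s(f)\diff\mu = 1 - \int f \diff \overline{\mu}$, which rearranges to the desired identity $\int f \diff \overline{\mu} = \mathcal{N}_s\!\left(\int \mathcal{N}_s(f)\diff\mu\right)$.

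The main obstacle is the index bookkeeping around the reversal: verifying that the chain $X\setminus A^*_i = \{x^*_1,\ldots,x^*_{i-1}\}$ appears as the accumulation chain for $1-f$, and handling the boundary conventions $A^*_{n+1} := \emptyset$ and $f(x^*_0) := 0$ so that the telescoping is clean. Once these indices line up, the claim follows by straightforward algebra.
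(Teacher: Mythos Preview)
The paper does not supply its own proof of this proposition: it is quoted from \cite{wang2010generalized} and stated without argument, so there is nothing to compare against on the paper's side.

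Your proposal is correct. The reversal $x^{**}_i = x^*_{n-i+1}$ indeed sorts $1-f$ increasingly, and your identification $A^{**}_i = \{x^*_1,\dots,x^*_{n-i+1}\} = X\setminus A^*_{n-i+2}$ is right; after the reindexing $j=n-i+1$ the two sums match term by term, and the telescoping of the constant part to $\mu(X)-\mu(\emptyset)=1$ uses exactly the boundary conventions $A^*_{n+1}=\emptyset$ and $A^*_1=X$ from the paper. One small remark: since $f$ is allowed to be real-valued, $1-f$ need not lie in $[0,1]$, but your computation only uses the alternative summation formula and translation behaviour (Corollary~\ref{cor:equivalentChoquet} in the paper), which hold for arbitrary real-valued $f$, so this causes no trouble.
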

\begin{prop}
\label{dualityOWA}
If $\mu$ is the symmetric measure corresponding to the OWA operator with weights \(\mathbf{w}=(w_i)_{i=1}^n\), then \(\overline{\mu}\) corresponds to the OWA operator with weights \(\overline{\mathbf{w}}=(w_{n-i+1})_{i=1}^n\).
\end{prop}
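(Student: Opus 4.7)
The plan is to invoke Proposition \ref{ChoquetOWA} twice: once to unpack the given $\mu$ as a sum of OWA weights, and once (in its converse direction) to identify the measure $\overline{\mu}$ with its OWA counterpart. The argument reduces to a short combinatorial computation with index reversal, so the substantive work is just careful bookkeeping of the summation bounds.

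First, I would use Proposition \ref{ChoquetOWA} to write $\mu(A) = \sum_{i=1}^{|A|} w_i$ for every $A\subseteq X$, which shows in particular that $\mu(A)$ depends only on $|A|$. Then I would compute the dual directly from its definition: since $\text{co}_{\mathcal{N}_s} A$ coincides with $X\setminus A$ (which has cardinality $n-|A|$), one gets
\begin{equation*}
\overline{\mu}(A) \;=\; 1 - \mu(X\setminus A) \;=\; 1 - \sum_{i=1}^{n-|A|} w_i \;=\; \sum_{i=n-|A|+1}^{n} w_i,
\end{equation*}
using $\sum_{i=1}^n w_i = 1$. This formula depends only on $|A|$, so $\overline{\mu}$ is itself symmetric.

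Now I would apply the converse half of Proposition \ref{ChoquetOWA} to $\overline{\mu}$: picking any chain of subsets $A_0 \subset A_1 \subset \cdots \subset A_n$ with $|A_i|=i$, the weight vector $\overline{\mathbf{w}}$ of the associated OWA operator is given by $\overline{w}_i = \overline{\mu}(A_i) - \overline{\mu}(A_{i-1})$. Substituting the expression above,
\begin{equation*}
\overline{w}_i \;=\; \sum_{j=n-i+1}^{n} w_j \;-\; \sum_{j=n-i+2}^{n} w_j \;=\; w_{n-i+1},
\end{equation*}
which is exactly the claimed reversed weight vector.

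The only mild subtlety is making sure the telescoping sum produces $w_{n-i+1}$ rather than, say, $w_{n-i}$ or $w_i$, so I would double-check the endpoints (the $i=1$ case gives $\overline{w}_1 = w_n$, and $i=n$ gives $\overline{w}_n = w_1$), which match the intuition that the dual of a soft maximum is a soft minimum. Beyond that, the statement is essentially a direct consequence of Proposition \ref{ChoquetOWA} and the definition of the dual measure.
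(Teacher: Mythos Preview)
Your proof is correct and follows essentially the same route as the paper: both invoke Proposition~\ref{ChoquetOWA} to write $\mu(A)=\sum_{i=1}^{|A|}w_i$, compute $\overline{\mu}(A)=\sum_{i=n-|A|+1}^{n}w_i$ using $\sum w_i=1$, and then identify this with the reversed weight vector via Proposition~\ref{ChoquetOWA} again. The only cosmetic difference is that the paper re-indexes the sum directly to $\sum_{i=1}^{|A|}\overline{w}_i$, whereas you extract $\overline{w}_i$ by taking successive differences; these are equivalent bookkeeping moves.
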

\begin{proof}
From Proposition \ref{ChoquetOWA} we know that
\[\mu(A):= \sum_{i=1}^{\abs{A}}w_i,\]
and thus
\begin{align*}
\overline{\mu}(A)&=1-\sum_{i=1}^{n-\abs{A}}w_i=\sum_{i=1}^n w_i-\sum_{i=1}^{n-\abs{A}}w_i\\
&=\sum_{i=n-\abs{A}+1}^{n}w_i=\sum_{i=1}^{\abs{A}}w_{n-i+1}\\
&=\sum_{i=1}^{\abs{A}}\overline{w}_{i},
\end{align*}
which using Proposition \ref{ChoquetOWA} proves the proposition.
\end{proof}
\section{Vague quantification}
\label{sec: vague quantification}
In this section, we recall the models of Zadeh \cite{zadehFuzzyQuantifier} and Yager \cite{yager1996quantifier} for vague quantification and we show how particular fuzzy quantifiers correspond to symmetric measures. Vague quantification models linguistic quantifiers such as ``most'', ``some'', ``almost all'', etc.
\subsection{Zadeh's model of vague quantification}
Zadeh's model represents a proportional linguistic quantifier as a fuzzy set \(Q\) of the unit interval. If \(p\) is the proportion for which a certain proposition holds, then \(Q(p)\) determines the truth value of the quantified proposition.
\begin{defn}{\cite{zadehFuzzyQuantifier}}
A fuzzy set \(Q\in\mathcal{F}([0,1])\) is called a \emph{regular increasing monotone (RIM) quantifier} if
\(Q\) is a non-decreasing function such that \(Q(0)=0\) and \(Q(1)=1\).
\end{defn}
\begin{exmp}
\label{exmp: RIM quantifiers}
The following RIM quantifiers represent the universal and existential quantifier:
\begin{align*}
Q_\forall(p)=\left\{
\begin{array}{ll}
	1 &\text{ if } p = 1\\
	0 & \text{ if } p < 1
\end{array}
\right.\;
\;\;Q_\exists(p)=\left\{
\begin{array}{ll}
	1 &\text{ if } p\neq 0\\
	0 & \text{ if } p=0
\end{array}
\right..
\end{align*}
Linguistic quantifiers such as ``most'' and ``some'' can be modelled using
the RIM quantifiers \(Q_{(\alpha,\beta)}\) (\(0\leq \alpha < \beta \leq 1\)) \cite{cornelis2007vaguely}:
\begin{align*}
	Q_{(\alpha,\beta)} (p)&= \left\{
	\begin{array}{ll}
		0 &  \;p\leq \alpha\\
		\frac{2(p-\alpha)^2}{(\beta-\alpha)^2} & \; \alpha \leq p \leq \frac{\alpha+\beta}{2}\\
		1-\frac{2(p-\beta)^2}{(\beta-\alpha)^2}  &  \;  \frac{\alpha+\beta}{2}\leq p\leq \beta \\
		1 & \; \beta\leq p
	\end{array}
	\right.,
\end{align*}
for example, we could use \(Q_{(0.3, 0.9)}\) and \(Q_{(0.1, 0.4)}\) to model ``most'' and ``some'', respectively.
\end{exmp}
Let \(Q\) be a RIM quantifier, Zadeh then uses \(Q\) to evaluate the truth value of the proposition \(``Q X's \text{ are } A's\text{''}\) as
\begin{align*}
Q\left(\frac{\abs{A}}{\abs{X}}\right),
\end{align*}
where \(A\in\mathcal{F}(X)\) and the cardinality of a fuzzy set is interpreted using Zadeh's $\Sigma$-count, i.e.\ \(\abs{A}=\sum_{x\in X}A(x)\). 
\subsection{Yager's model of vague quantification}
Yager's approach also represents quantifiers as fuzzy sets of the unit interval, but uses OWA aggregation for the evaluation of the truth values. He evaluates propositions of the form ``\(QX\)'s are \(A\)'s'', where \(Q\) is a RIM quantifier and \(A\) is a fuzzy set, as
\begin{equation}
	\label{yagerOWA}
OWA_{\mathbf{w}}(A), \;\; \text{where}\;\; w_i := Q\left(\frac{i}{n}\right)-Q\left(\frac{i-1}{n}\right).
\end{equation}
\begin{exmp}
	Suppose \(X=\{x_1,x_2, x_3, x_4\}\) is the set of all basketball players and \(A=\{(x_1, 0.5),(x_2,0.5),(x_3,1),(x_4,1)\}\) the fuzzy set describing tallness. Then the truth value of the statement ``most basketball players are tall'', in Yager's model, is given by:
	\begin{align*}
		OWA_{\mathbf{w}}(A)&= w_1 + w_2 + 0.5* (w_3+w_4)\\
		&\approx 0 + 0.22 + 0.5*(0.65+0.125) \approx 0.61,
	\end{align*}
where Equation \eqref{yagerOWA} is used with \(Q_{(0.3,0.9)}\).\\
In Zadeh's model this evaluates to:
\begin{align*}
	Q_{(0.3, 0.9)}\left(\frac{\abs{A}}{\abs{X}}\right)&= Q_{(0.3, 0.9)}\left(\frac{\sum_{x\in X} A(x)}{4}\right)\\
	&= Q_{(0.3, 0.9)}\left(\frac{0.5+0.5+1+1}{4}\right) = 0.875.
\end{align*}
Notice that the truth value in Yager's model is closer to what one might expect, since half of the basketball players only have a membership degree of \(0.5\) to the fuzzy set ``tall people''. The reason for this is that Zadeh's model only looks at the average membership degree, whereas Yager's model gives weight to each instance according to its relative degree of membership.
\end{exmp}
The next definition and proposition describe the relationship between weight vectors and RIM quantifiers.
\begin{defn}
	A RIM quantifier \(Q\) is associated with a weight vector \(\mathbf{w}\) if \(Q\) interpolates the following set of points:
	\begin{equation}
		\label{quantifierAssWeight}
\bigcup_{i=1}^n\left\{\left(\frac{i}{n}\,,\,\sum_{j\leq i}w_j\right)\right\}.
	\end{equation}
Two RIM quantifiers are called \emph{semantically equivalent} (on a universe \(X\)) if they are associated with the same weight vector.
\end{defn}
\begin{prop}
	A RIM quantifier \(Q\) is associated with a weight vector \(\mathbf{w}\) if and only if
	\begin{equation}
		\label{weightAssQuantifier}
w_i = Q\left(\frac{i}{n}\right)-Q\left(\frac{i-1}{n}\right),
	\end{equation}
	for every \(i\in\{1,\dots,n\}\).
\end{prop}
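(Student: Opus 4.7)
The plan is to show that this proposition is essentially a telescoping identity, once we unpack what ``associated with a weight vector'' means. By definition, $Q$ is associated with $\mathbf{w}$ precisely when $Q(i/n) = \sum_{j \leq i} w_j$ for every $i \in \{1,\dots,n\}$, so the whole argument reduces to passing between the partial-sum form and the consecutive-difference form.

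For the forward direction, I would assume that $Q$ interpolates the points in \eqref{quantifierAssWeight} and simply compute, for $i \geq 2$,
\[
Q\!\left(\tfrac{i}{n}\right) - Q\!\left(\tfrac{i-1}{n}\right) = \sum_{j \leq i} w_j - \sum_{j \leq i-1} w_j = w_i.
\]
The case $i = 1$ needs the boundary value $Q(1/n) - Q(0) = w_1$, which uses the fact that $Q(0) = 0$, built into the definition of a RIM quantifier; this is the one point where the RIM assumption (rather than pure bookkeeping) enters.

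For the converse, assume \eqref{weightAssQuantifier} holds for every $i$ and telescope:
\[
\sum_{j \leq i} w_j = \sum_{j=1}^{i}\left[Q\!\left(\tfrac{j}{n}\right) - Q\!\left(\tfrac{j-1}{n}\right)\right] = Q\!\left(\tfrac{i}{n}\right) - Q(0) = Q\!\left(\tfrac{i}{n}\right),
\]
again invoking $Q(0) = 0$. Hence $Q$ passes through every point in \eqref{quantifierAssWeight} and is therefore associated with $\mathbf{w}$.

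There is no genuine obstacle here: the argument is a one-line telescoping in each direction. The only subtlety worth flagging is that, since the definition only requires $Q$ to \emph{interpolate} the listed points, the proposition characterises the weights uniquely but leaves the behaviour of $Q$ between the nodes $i/n$ unconstrained — which is exactly why two different RIM quantifiers can be semantically equivalent on a fixed universe $X$ of cardinality $n$.
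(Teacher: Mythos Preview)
Your proposal is correct and matches the paper's own proof almost verbatim: both directions are handled by the same partial-sum versus consecutive-difference identity, with the paper phrasing the converse as an induction and you phrasing it as a telescoping sum. Your explicit mention of the boundary condition $Q(0)=0$ for the $i=1$ case is a small clarification the paper leaves implicit.
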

\begin{proof}
	Suppose \(Q\) is a RIM quantifier that interpolates the points
	\[\bigcup_{i=1}^n\left\{\left(\frac{i}{n}\,,\,\sum_{j\leq i}w_j\right)\right\},\]
	then
	\[Q\left(\frac{i}{n}\right)-Q\left(\frac{i-1}{n}\right)=\sum_{j\leq i}w_j-\sum_{j\leq i-1}w_j=w_i.\]
	Conversely, suppose 
	\[w_i = Q\left(\frac{i}{n}\right)-Q\left(\frac{i-1}{n}\right),\]
	for every \(i\in\{1,\dots,n\}\). By induction we have that 
	\[Q\left(\frac{i}{n}\right)= w_i+Q\left(\frac{i-1}{n}\right)=w_i+\sum_{j\leq i-1}w_j=\sum_{j\leq i}w_j.\]
\end{proof}
\begin{cor}
Semantically equivalent RIM quantifiers have the same truth value for propositions of the form ``\(QX\)'s are \(A\)'s'' (Equation \eqref{yagerOWA}).
\end{cor}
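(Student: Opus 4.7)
The plan is to observe that this corollary is essentially an immediate unwinding of the preceding proposition, so my proof will be a short chain of definition-invocations rather than new calculation. First I would fix a universe $X$ with $|X|=n$ and two RIM quantifiers $Q_1,Q_2$ that are semantically equivalent on $X$; by the definition preceding the proposition, this means there exists a single weight vector $\mathbf{w}=(w_i)_{i=1}^n$ with which both $Q_1$ and $Q_2$ are associated in the sense of \eqref{quantifierAssWeight}.

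Next I would apply the preceding proposition in both directions. From ``$Q_1$ is associated with $\mathbf{w}$'' the proposition yields $w_i = Q_1(i/n)-Q_1((i-1)/n)$, and symmetrically $w_i = Q_2(i/n)-Q_2((i-1)/n)$, for every $i\in\{1,\dots,n\}$. Hence the weight vector that Equation \eqref{yagerOWA} extracts from $Q_1$ and from $Q_2$ is in both cases the same vector $\mathbf{w}$.

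Finally, I would recall that by definition Yager's truth value of ``$QX$'s are $A$'s'' is $\mathrm{OWA}_{\mathbf{w}}(A)$ with exactly this extracted vector. Since the extracted vector coincides for $Q_1$ and $Q_2$, the resulting OWA aggregation — and therefore the assigned truth value — is identical for every $A\in\mathcal{F}(X)$, which is the claim.

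I do not anticipate any real obstacle: the result is a direct corollary of the preceding proposition. The only point that warrants care is that semantic equivalence is defined relative to a fixed universe (the value of $n$ in \eqref{quantifierAssWeight} is $|X|$), and Equation \eqref{yagerOWA} uses the same $n$; keeping this $n$ fixed throughout the argument is what makes the two weight vectors literally the same object rather than merely analogous.
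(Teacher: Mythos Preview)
Your proposal is correct and matches the paper's approach: the paper gives no explicit proof for this corollary, treating it as immediate from the preceding proposition, and your argument is precisely that unwinding. Your care about fixing $n=|X|$ is appropriate but already implicit in the paper's setup.
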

The previous proposition gives us a one-to-one correspondence (modulo RIM quantifiers with equivalent semantics) between RIM quantifiers and OWA weights, when the universe \(X\) is fixed. Indeed, for every OWA weight vector \(\mathbf{w}\), we can define \(Q\) as the unique step function that interpolates the set of points defined in Equation \eqref{quantifierAssWeight}:
\[Q(p)= \sum_{i\leq p*n}w_i,\]
 which is indeed a RIM quantifier since \(\mathbf{w}\) is a weight vector. Conversely, we have Equation \eqref{weightAssQuantifier}.
\begin{exmp}
	\label{exmp: additive quantifier}
	A common weighting scheme (cf.\ \cite{vluymans2019dealing}) for the OWA operator, in the context of fuzzy rough lower approximations, is the additive weight vector defined by 
	\[W_L^{add}=\left\langle \frac{2}{n(n+1)},\frac{4}{n(n+1)},\dots,\frac{2(n-1)}{n(n+1)},\frac{2}{n+1}\right\rangle.\]
	We now want to find a quantifier \(Q_{add}\) that is associated with \(W^{add}_L\). More specifically, \(Q_{add}\) has to interpolate the set of points
	\[\bigcup_{i=1}^n\left\{\left(\frac{i}{n}\,,\,\sum_{j\leq i}\frac{2j}{n(n+1)}\right)\right\}=\bigcup_{i=1}^n\left\{\left(\frac{i}{n}\,,\,\frac{2}{n(n+1)}\sum_{j\leq i}j\right)\right\}=\bigcup_{i=1}^n\left\{\left(\frac{i}{n}\,,\,\frac{i(i+1)}{n(n+1)}\right)\right\},\]
	so we get, by applying the substitution \(x=i/n\), that the following RIM quantifier \(Q_{add}\) corresponds to \(W^{add}_L\):
	\[Q_{add}(x)=\frac{x(xn+1)}{n+1}.\]
\end{exmp}
Since we know that weight vectors of OWA operators are equivalent to symmetric measures, we can also convert a RIM quantifier \(Q\) into a symmetric measure \(\mu_Q\) (Proposition \ref{ChoquetOWA}):
\begin{align}
	\label{vagueQuantif}
	\mu_{Q}(A)=Q\left(\frac{\abs{A}}{\abs{X}}\right),
\end{align}
and we can rewrite the evaluation as follows:
\begin{equation}
	\label{Eq: vagueChoquet}
``Q X's \text{ are } A's\text{''} \text{ evaluated as }	\int A \diff \mu_Q,
\end{equation}
which in the case that \(A\) is crisp reduces to \(\mu_Q(A)\) (follows from Definition \ref{defn: ChoquetIntegral}). When \(A\) is crisp, Yager's and Zadeh's models thus coincide.
\section{Choquet-based fuzzy rough sets}
\label{choq-basedFuzzyRoughSets_section}
\subsection{Motivation and definition}
Note that by Proposition \ref{ChoquetOWA}, we can rewrite OWAFRS as follows:
	\begin{align}
		\label{lowerapprox}
	(\underline{\text{apr}}_{R,\mu_l}A)(y)&=\int \mathcal{I}(R(x,y),A(x))\diff\mu_l(x),\\
	\label{upperapprox}
	(\overline{\text{apr}}_{R,\mu_u}A)(y)&=\int
	\mathcal{C}(R(x,y),A(x))\diff\mu_u(x),
\end{align}
where $\mu_l$ and $\mu_u$ are two symmetric measures. From the previous section we know that these expressions can be interpreted as vaguely quantified propositions. For example, suppose \(\mu_l\) is the measure corresponding with ``most'' and $\mu_u$ with ``some'', then the degree of membership of an element \(y\) to the lower approximation (Equation \eqref{lowerapprox}) is equal to the truth value of the proposition ``Most elements indiscernible to \(y\) are in \(A\)''. Analogously we have that the degree of membership of an element \(y\) to the upper approximation (Equation \eqref{upperapprox}) is equal to the truth value of the proposition ``Some elements are indiscernible to \(y\) and are in \(A\)''.  This approach is thus intuitive, since it closely resembles the definition of rough sets. But why restrict ourselves to symmetric measures? If we allow non-symmetric measures, we gain more flexibility to reduce noise, as the following example shows.
\begin{exmp}
Suppose we have a crisp set \(O\) containing all the instances that are outliers, unreliable or inaccurate, then a useful pair of quantifiers could be ``for all except (maybe) elements of \(O\)'' and ``there exists an element in \(X \setminus O\)''. These quantifiers can be modelled by the partial minimum and maximum:
\begin{align*}
\label{ForallXwithoutO}
\mu_{\forall x\in X\setminus O}(B)&=\left\{
\begin{array}{ll}
	1 &\text{ if } X\bslash O \subseteq B\\
	0 & \text{elsewhere}
\end{array}
\right.,\\
\mu_{\exists x\in X\bslash O}(B)&:=\overline{\mu}_{\forall x\in X\setminus O}(B)=1-\mu_{\forall x\in X\setminus O}(\comp B)\\&=\left\{
\begin{array}{ll}
	0 &\text{ if } X\bslash O \subseteq X\bslash B\\
	1 & \text{elsewhere}
\end{array}
\right.=\left\{
\begin{array}{ll}
	0 &\text{ if } B\subseteq O\\
	1 & \text{elsewhere}
\end{array}
\right..
\end{align*}
\end{exmp}
\begin{prop}
	\label{propPartialMinimum}
	Let \(O\) be a crisp subset of \(X\) and \(f:X\to \mathbb{R}\), then
	\[\int f \diff \mu_{\forall x\in X\setminus O}=\min_{x\in X\setminus O}f(x).\]
\end{prop}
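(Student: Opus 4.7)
The plan is to evaluate the Choquet integral directly from its definition, using the very simple structure of the measure $\mu_{\forall x\in X\setminus O}$: it takes only the values $0$ and $1$, with the jump occurring precisely when $X\setminus O$ is included in the integration set. Throughout the argument I assume $O\neq X$, so that $X\setminus O\neq\emptyset$ and $\mu_{\forall x\in X\setminus O}$ is a well-defined monotone measure.

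First, I would order $X$ as $(x_1^\ast,\dots,x_n^\ast)$ with $f(x_1^\ast)\leq\dots\leq f(x_n^\ast)$ and define the chain $A_i^\ast=\{x_i^\ast,\dots,x_n^\ast\}$ as in Definition \ref{defn: ChoquetIntegral}. Let $i^\ast:=\min\{i\in\{1,\dots,n\}:x_i^\ast\in X\setminus O\}$, which exists because $X\setminus O\neq\emptyset$. By construction, $x_1^\ast,\dots,x_{i^\ast-1}^\ast\in O$ and $x_{i^\ast}^\ast\in X\setminus O$. I would then observe that $X\setminus O\subseteq A_i^\ast$ if and only if none of the discarded elements $x_1^\ast,\dots,x_{i-1}^\ast$ lies in $X\setminus O$, which by the definition of $i^\ast$ is equivalent to $i\leq i^\ast$. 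Hence
\[
\mu_{\forall x\in X\setminus O}(A_i^\ast)=
\begin{cases}
1 & \text{if } i\leq i^\ast,\\
0 & \text{if } i> i^\ast.
\end{cases}
\]

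Next, I would plug this into the equivalent formulation of the Choquet integral from the proposition preceding Corollary \ref{cor:equivalentChoquet}, namely $\int f \diff\mu = \sum_{i=1}^{n} f(x_i^\ast)\bigl[\mu(A_i^\ast)-\mu(A_{i+1}^\ast)\bigr]$ with $\mu(A_{n+1}^\ast):=0$. All differences $\mu(A_i^\ast)-\mu(A_{i+1}^\ast)$ vanish except the single term at $i=i^\ast$, which equals $1-0=1$. The sum therefore collapses to $f(x_{i^\ast}^\ast)$. Finally, since $x_{i^\ast}^\ast$ is by definition the element of $X\setminus O$ with the smallest $f$-value in the ordering, $f(x_{i^\ast}^\ast)=\min_{x\in X\setminus O}f(x)$, which is the desired identity.

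The only delicate point is the bookkeeping around $i^\ast$ and the chain of equivalences ``$X\setminus O\subseteq A_i^\ast \iff \{x_1^\ast,\dots,x_{i-1}^\ast\}\cap(X\setminus O)=\emptyset \iff i\leq i^\ast$''; everything else is routine. I prefer the second (equivalent) formulation of the Choquet integral over the original one from Definition \ref{defn: ChoquetIntegral} because it avoids the artificial convention $f(x_0^\ast):=0$, which would otherwise force a separate comment about translation-invariance via Corollary \ref{cor:equivalentChoquet} when $f$ is not nonnegative.
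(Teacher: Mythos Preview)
Your proposal is correct and follows essentially the same route as the paper: order the elements by $f$, identify the first index landing in $X\setminus O$, observe that $\mu_{\forall x\in X\setminus O}(A_i^\ast)$ equals $1$ up to that index and $0$ afterwards, and read off the value. The only cosmetic difference is that the paper uses the telescoping form from Definition~\ref{defn: ChoquetIntegral} directly (so the first $k$ terms collapse to $f(x_k^\ast)$), whereas you use the equivalent summation $\sum_i f(x_i^\ast)[\mu(A_i^\ast)-\mu(A_{i+1}^\ast)]$, which isolates the single surviving term at $i^\ast$; both computations are the same one-line calculation.
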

\begin{proof}
	Let \(\left( x^\ast_1,x^\ast_2,\dots,x^\ast_n\right)\) be an ordering of \(X\) such that the following inequalities hold:
	\[f(x^\ast_1)\leq \dots\leq f(x^\ast_k)\leq \dots \leq f(x^\ast_n),\]
	where \(k\) is the first index such that \(x^\ast_k\in X\setminus O\). Using the definition of the Choquet integral we get:
	\begin{align*}
		\int f \diff \mu_{\forall x\in X\setminus O}&=\sum_{i=1}^{n}\left[f(x^\ast_i)-f(x^\ast_{i-1})\right]\cdot\mu_{\forall x\in X\setminus O}(\{x^\ast_i,x^\ast_{i+1},\dots,x^\ast_n\})\\
		&=\sum_{i=1}^{k}\left[f(x^\ast_i)-f(x^\ast_{i-1})\right]\cdot 1+\sum_{i=k+1}^{n}\left[f(x^\ast_i)-f(x^\ast_{i-1})\right]\cdot0\\
		&=f(x^\ast_{k})=\min_{x\in X\setminus O}f(x).
	\end{align*}
\end{proof}
\begin{cor}
	\label{propPartialMaximum}
	Let \(O\) be a crisp subset of \(X\) and \(f:X\to \mathbb{R}\), then 
	\[\int f \diff \mu_{\exists x\in X\setminus O}=\sup_{x\in X\setminus O}f(x).\]
\end{cor}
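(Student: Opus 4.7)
The plan is to derive the corollary by invoking the duality result of Proposition \ref{lemmaDuality} on the already-proven Proposition \ref{propPartialMinimum}. By construction, \(\mu_{\exists x\in X\setminus O}=\overline{\mu}_{\forall x\in X\setminus O}\), so Proposition \ref{lemmaDuality} yields
\[\int f\diff\mu_{\exists x\in X\setminus O}=\mathcal{N}_s\!\left(\int \mathcal{N}_s(f)\diff\mu_{\forall x\in X\setminus O}\right).\]
By Proposition \ref{propPartialMinimum} the inner integral equals \(\min_{x\in X\setminus O}(1-f(x))=1-\max_{x\in X\setminus O}f(x)\), and applying \(\mathcal{N}_s\) once more gives \(\max_{x\in X\setminus O}f(x)\), which coincides with \(\sup_{x\in X\setminus O}f(x)\) since \(X\) is finite.

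The one mildly delicate point is that \(\mathcal{N}_s\) is a priori defined only on \([0,1]\), whereas the corollary concerns an arbitrary real-valued \(f\). I would sidestep this using Corollary \ref{cor:equivalentChoquet}: replace \(f\) by \(f+c\) and then by \((f+c)/M\) for constants so that the transformed function takes values in \([0,1]\); apply the duality argument; and then undo the positive affine transformation, noting that both the Choquet integral and the supremum on the right-hand side transform covariantly.

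As a self-contained alternative that avoids the duality altogether, one can simply repeat the unrolling of Definition \ref{defn: ChoquetIntegral} used in the proof of Proposition \ref{propPartialMinimum}. Order \(X=(x_1^\ast,\dots,x_n^\ast)\) so that \(f(x_1^\ast)\leq\cdots\leq f(x_n^\ast)\), and let \(k\) be the largest index with \(x_k^\ast\in X\setminus O\). Then \(\mu_{\exists x\in X\setminus O}(A_i^\ast)=1\) for \(i\leq k\) (since \(A_i^\ast\ni x_k^\ast\notin O\)) and \(\mu_{\exists x\in X\setminus O}(A_i^\ast)=0\) for \(i>k\) (since \(A_i^\ast\subseteq O\)). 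The telescoping sum then collapses to \(f(x_k^\ast)=\max_{x\in X\setminus O}f(x)\).

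The main obstacle, if any, is purely cosmetic: the legitimacy of applying the \(\mathcal{N}_s\)-based duality to an unbounded integrand. The direct route bypasses this completely, at the cost of recycling the combinatorial bookkeeping of the previous proof, so I would favour presenting the duality argument for brevity and pointing out the affine reduction in a single line.
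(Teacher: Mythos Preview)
Your main argument is correct and is essentially the paper's own proof: both derive the result from Proposition~\ref{lemmaDuality} applied to \(\mu_{\exists x\in X\setminus O}=\overline{\mu}_{\forall x\in X\setminus O}\) together with Proposition~\ref{propPartialMinimum}. The only cosmetic difference is that the paper folds the two applications of \(\mathcal{N}_s\) and the translation equivariance (Corollary~\ref{cor:equivalentChoquet}) into the single identity \(\int f\diff\overline{\mu}=-\int(-f)\diff\mu\), so the domain worry you flag never surfaces; your affine-rescaling remark and the direct unrolling are fine but unnecessary here.
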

\begin{proof}
	Directly follows from Proposition \ref{lemmaDuality}:
	\begin{align*}
		\int f \diff \mu_{\exists x\in X\setminus O}&=\int f \diff \overline{\mu}_{\forall x\in X\setminus O}=-\int (-f) \diff \mu_{\forall x\in X\setminus O}\\&=- \inf_{x\in X\setminus O}\left(-f(x)\right)=\sup_{x\in X\setminus O}f(x).
	\end{align*}
\end{proof}
Using these non-symmetric measures in Equation \eqref{lowerapprox} and \eqref{upperapprox}, we get that the degree of membership of an element \(y\) to the lower approximation is equal to the truth value of the proposition ``All trustworthy elements that are indiscernible to \(y\) are in \(A\)''. An analogous interpretation holds for the upper approximation.\\
As we will show in Subsection \ref{section:measures}, it is possible to extend the approach of the previous example to fuzzy sets \(O\) and quantifiers representing "most of the trustworthy objects". The following examples show how such fuzzy sets \(O\) can be constructed in practice.
\begin{exmp}
\label{exmp: outlierscoreO}
Suppose we have a decision system \((X,\mathcal{A}\cup\{d\})\) where \(d\) is a categorical attribute. Then we can define \(O(x)\) as the normalized outlier score \cite{normalizedLOF} of \(x\) (obtained from a certain outlier detection algorithm) when compared to other elements of \([x]_d\) (based on the conditional attributes). An outlier score measures the degree to which a data point differs from other observations, and normalization transforms this score in such a way that it can be interpreted as a degree of outlierness.
\end{exmp}
\begin{exmp}
\label{exmp: patientsO}
Suppose \(X\) consists of patients from several different hospitals, \(A\) is the subset of patients that have a disease and \(R\) is a similarity relation between patients based on a set of symptoms. Then a confidence score \(c_i\) can be attached to each hospital \(i\) based on the accuracy of the tests performed to trace the disease (and the symptoms). The membership degree of a patient \(x\) of hospital \(i\) to \(O\) can then be defined as \(O(x)=1-c_i\). 
\end{exmp}
These examples motivate the definition of Choquet-based fuzzy rough sets (CFRS):
\begin{defn}
	\label{defn: Choquet-basedfuzzyRoughSets}
	Given \(R\in\mathcal{F}(X\times X)\), monotone measures \(\mu_l\) and \(\mu_u\) on \(X\) and $A\in\mathcal{F}(X)$, then the \emph{Choquet lower} and \emph{upper approximation} of $A$ w.r.t.\ $R$, $\mu_l$ and \(\mu_u\) are given by:
	\begin{align*}
	(\underline{\text{apr}}_{R,\mu_l}A)(y)&=\int \mathcal{I}(R(x,y),A(x))\diff\mu_l(x)\\
	(\overline{\text{apr}}_{R,\mu_u}A)(y)&=\int
	\mathcal{C}(R(x,y),A(x))\diff\mu_u(x),
	\end{align*}
	where $\imp$ is an implicator and $\mathcal{C}$ is a conjunctor.
\end{defn}
Notice that we have discarded the orness conditions in the definition of OWA-based fuzzy rough sets. The reason for this is that these orness conditions were introduced mainly in the hope they would yield extra theoretical properties such as inclusion of the lower approximation in the upper approximation, but as we shall see in Subsection \ref{sec:theoreticalproperties}, they do not.
\subsection{Examples of non-symmetric measures}
\label{section:measures}
As described in the previous subsection we can accommodate non-symmetry by introducing a fuzzy set \(O\) in \(X\) that represents the degree of inconfidence. The function \(O(x)\) could, for example, be seen as an outlier score in \([0,1]\) (Example \ref{exmp: outlierscoreO}) or it could represent the unreliability or inaccuracy of the observation (Example \ref{exmp: patientsO}).  We now define several non-symmetric measures using the fuzzy set \(O\).
\subsubsection{Fuzzy removal}
One option to use \(O\) to define a non-symmetric measure is as follows:
\begin{equation}
\label{fuzzy removal}
\mu_{\forall x\in X\bslash O}(A)=\left\{
\begin{array}{ll}
1&\text{ if }A=X \\
0 &\text{ if }A=\emptyset\\
\mathcal{T}\underbrace{(O(x))}_{x\in X\setminus A} & \text{elsewhere}
\end{array}
\right.,
\end{equation}
where \(\mathcal{T}\) is a t-norm (e.g.\ minimum).
\begin{prop}
 The function \(\mu_{\forall x\in X\bslash O}\) is a monotone measure.
\end{prop}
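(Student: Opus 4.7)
The plan is to verify the two defining properties of a monotone measure separately: boundary values and monotonicity with respect to inclusion.

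First I would handle the boundary conditions. By the definition in Equation~\eqref{fuzzy removal}, we immediately have $\mu_{\forall x\in X\setminus O}(\emptyset)=0$ and $\mu_{\forall x\in X\setminus O}(X)=1$, so the normalization requirements hold by construction. I would also observe in passing that the ``elsewhere'' branch always lies in $[0,1]$, since $\mathcal{T}$ maps $[0,1]^2$ into $[0,1]$ and its associativity lets us extend it unambiguously to any finite arity.

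Next, for monotonicity, fix $A\subseteq B\subseteq X$. If $A=\emptyset$ or $B=X$, monotonicity is immediate because $\mu_{\forall x\in X\setminus O}(A)=0$ or $\mu_{\forall x\in X\setminus O}(B)=1$. Similarly, if $A=X$ then $B=X$ and the claim is trivial; and if $B=\emptyset$ then $A=\emptyset$. In the remaining case, both $A$ and $B$ are proper nonempty subsets, so both evaluate through the third branch. Here I would use that $X\setminus B\subseteq X\setminus A$, so one can partition $X\setminus A=(X\setminus B)\sqcup (B\setminus A)$ and, by associativity of $\mathcal{T}$, write
\begin{equation*}
\mu_{\forall x\in X\setminus O}(A)=\mathcal{T}\Bigl(\,\mathcal{T}(O(x))_{x\in X\setminus B}\,,\,\mathcal{T}(O(x))_{x\in B\setminus A}\Bigr).
\end{equation*}
Since $\mathcal{T}(s,t)\leq \mathcal{T}(s,1)=s$ for every $s,t\in[0,1]$ (by monotonicity of $\mathcal{T}$ in the second argument and its neutral element), this last expression is bounded above by $\mathcal{T}(O(x))_{x\in X\setminus B}=\mu_{\forall x\in X\setminus O}(B)$, which is the desired inequality.

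The only mildly delicate point is the bookkeeping when one of $A$ or $B$ is at the boundary ($\emptyset$ or $X$) and the other is not, because the formula is defined piecewise; I would handle those cases explicitly before reducing to the generic argument. Everything else is routine once one observes that adjoining further arguments to an iterated t-norm can only decrease its value.
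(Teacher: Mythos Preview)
Your proof is correct and follows essentially the same route as the paper's own argument: both rely on the fact that adjoining further arguments to an iterated t-norm can only decrease its value, justified by the neutral-element property $\mathcal{T}(s,1)=s$ together with monotonicity. The paper states this key inequality in one line and appeals to induction, whereas you unpack it via the decomposition $X\setminus A=(X\setminus B)\sqcup(B\setminus A)$ and also handle the piecewise boundary cases explicitly; this is a matter of detail rather than a genuinely different idea.
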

\begin{proof}
The monotonicity follows from the following property of t-norms:
\[\mathcal{T}(x_1,x_2,\dots,x_k)\geq \mathcal{T}(x_1,x_2,\dots,x_n)\;\text{ if }k\leq n\text{ and } x_i\in[0,1],\]
which can be seen using induction, the neutral element property and the increasingness of t-norms. 
\end{proof}
 We will call this measure the \emph{fuzzy removal} measure, since in the case \(O\) is crisp, the Choquet integral with respect to this measure is equal to the partial minimum (Proposition \ref{propPartialMinimum})
The vague quantifier interpretation of the fuzzy removal measure could thus be ``for all except (maybe) elements of \(O\)''.
\begin{exmp}
\label{exmp:fuzzy removal}
Let \(X= \{x_1,x_2,x_3,x_4,x_5\}\) be the set of people who went to a party and \(A= \{x_1, x_2, x_3\}\) the subset of people who tested positive for a certain disease. Now suppose we know that \(x_1, x_2\) and \(x_3\) went to the hospital for the test, while \(x_4\) and \(x_5\) used a home testing kit. Then a sensible choice for \(O\), which in this case represents the unreliability,  could be \(O = \{(x_1,0), (x_2, 0), (x_3,0), (x_4,0.3), (x_5, 0.3)\}\), since \(x_4\) and \(x_5\) can report any result they want and the tests are less accurate. Then the evaluation of the sentence ``all reliable tests from people at the party are positive'' is:
\begin{align*}
\int A \diff \mu_{\forall x\in X\setminus O} = \mu_{\forall x\in X\setminus O} (A) = \tnorm_{\min}(O(x_4),O(x_5))= 0.3,
\end{align*}
it thus evaluates to the t-norm of all the unreliabilities of the elements that are not in \(A\). Thus if there was one element \(x\) fully reliable (\(O(x)=0\)) and with a negative test, then the sentence would evaluate to false, as expected.
\end{exmp}
\subsubsection{Weighted ordered weighted  average}
Another idea for a non-symmetric measure is:
\begin{equation}
	\label{WOWA}
	\mu(A)=Q\left(\sum_{x_{i}\in A} p_i\right),
\end{equation}
where \(Q\) is a RIM quantifier and \(\mathbf{p}\) a weight vector describing the confidence, reliability, accuracy or non-outlierness of each observation:
\begin{equation}
	\label{WOWA_weight}
	p_i =  \frac{1-O(x_i)}{n-\sum_{j=1}^{n}O(x_j)}.
\end{equation}
The measure in Equation \eqref{WOWA} corresponds to the Weighted Ordered Weighted Averaging (WOWA) operator \cite{WOWA,WOWA<Choquet}, which is a generalization of the OWA and the weighted mean. The RIM quantifier \(Q\) determines the OWA part of the WOWA and the weight vector \(\mathbf{p}\) the weighted mean part. The WOWA operator is also equivalent with Yager's importance weighted quantifier guided aggregation \cite{yager1996quantifier}. These measures could be interpreted as quantifiers of the form ``\(Q\) of the trustworthy/reliable objects''. 
\begin{exmp}
\label{exmp: wowa}
Recall Example \ref{exmp:fuzzy removal}, and suppose we want to evaluate the sentence ``Most reliable tests from people at the party are positive''. Then a suitable choice for the measure would be Equation \eqref{WOWA} together with Equation \eqref{WOWA_weight} and the RIM quantifier ``most'' from Example \ref{exmp: RIM quantifiers}. The sentence can then be evaluated as:
\begin{align*}
\int A \diff \mu = \mu (A) &= Q_{(0.3,0.9)}\left(\sum_{x_i \in A} p_i\right)=Q_{(0.3,0.9)}\left(\frac{1 + 1 + 1}{5- 0.3-0.3}\right)\\
&\approx Q_{(0.3,0.9)}\left(0.68\right) \approx 0.875.
\end{align*}
If we do not take into account reliability, i.e.\ we evaluate the sentence ``Most people at the party are tested positive'' we get:
\[\int A \diff \mu_{Q_{(0.3,0.9)}}= Q_{(0.3,0.9)}\left(\frac{3}{5}\right)= 0.5,\]
which is, as expected, smaller than when considering reliability.
\end{exmp}
Another WOWA operator that could be used goes as follows: suppose $\pi$ is a permutation on \(X\) such that the following inequalities hold:
\[O(x_{\pi(1)})\leq O(x_{\pi(2)})\leq\dots\leq O(x_{\pi(n)}).\]
Using this ordering we can define a measure \(\mu\) as:
\begin{equation}
\label{exmp2)symmetric1}
\mu(A)=Q\left(\sum_{x_{\pi(i)}\in A} w_i\right),
\end{equation}
where \(Q\) is a RIM quantifier and $\mathbf{w}$ an \(n\)-dimensional weight vector. The weight \(w_i\) can be thought of as the weight given to the element with the \(i\)th smallest membership in \(O\).  If \(w_i=\frac{1}{n}\) for all \(i\in\{1,2,\dots,n\}\), we get the symmetric measure defined by \eqref{vagueQuantif}. Suppose \(O(x_{\pi(1)}),\dots,O(x_{\pi(k)})\) are all roughly zero and the rest roughly one, in other words we partitioned \(X\) in a set of non-outliers and a set of outliers (or accurate and inaccurate observations etc.). Then the following weight vector \(\mathbf{w}\) could be meaningful:
\begin{equation}
\label{exmp2-symmetric2}
w_i=\left\{
\begin{array}{ll}
\frac{(1-t)}{k}+\frac{t}{n} &\text{ if } i\in\{1,\dots,k\}\\
\frac{t}{n}& \text{elsewhere}
\end{array}
\right.,
\end{equation}
where \(t\in[0,1]\) is the weight given to the outliers. Notice that the measure from Equation \eqref{exmp2)symmetric1}, together with the weights from Equation \eqref{exmp2-symmetric2}, can be written as follows (when \(O\) is crisp):
\begin{equation}
	\label{TSmeasure}
	\mu(A)= Q\left(\abs{A\cap O}*\frac{t}{n} + \abs{A\cap \comp O}*\left(\frac{(1-t)}{k}+\frac{t}{n}\right)\right).
\end{equation}
The measure of a set \(A\) only depends on the cardinalities \(\abs{A\cap O}\) and \(\abs{A\cap \comp O}\), it is thus a two-symmetric measure \cite{p-symmetry}.
If \(t=0\) then \(\mu\) can be seen as a symmetric measure on the set \(X\setminus O\). On the other hand if \(t=1\), then we don't exclude outliers to any degree, and have a symmetric measure on \(X\). 
\begin{exmp}
Recall Example \ref{exmp: wowa} and let us define \(O\) as a crisp set \(O=\{x_4,x_5\}\). If we evaluate the sentence ``Most reliable tests from people at the party are positive'', using the measure from Equation \eqref{TSmeasure}, we get:
\begin{align*}
\int A \diff \mu = Q_{(0.3,0.9)}\left(3* \left(\frac{1-t}{3}+\frac{t}{5}\right)\right)=Q_{(0.3,0.9)}\left(1-t*\frac{2}{5}\right).
\end{align*}
\end{exmp}
\subsection{Theoretical properties of CFRS}
\label{sec:theoreticalproperties}
In this subsection, we take a look at the theoretical properties of CFRS. We start out with the following proposition that states that the only pair of measures for which the lower approximation is necessarily contained in the upper approximations, corresponds to the universal and existential quantifier.
\begin{prop}
	If \(\mu_l\) and \(\mu_u\) are two monotone measures such that
	\begin{equation}
		\label{eq:propinc}
		\underline{\text{apr}}_{R,\mu_l}A\subseteq \overline{\text{apr}}_{R,\mu_u}A
	\end{equation}
	holds for every equivalence relation \(R\) and crisp set \(A\), then \(\mu_l=\mu_\forall\) and \(\mu_u=\mu_\exists\).
\end{prop}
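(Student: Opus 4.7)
The plan is to reduce the inclusion condition to a pointwise inequality between evaluations of \(\mu_l\) and \(\mu_u\) on subsets of \(X\), and then to specialize to a minimally discriminating equivalence relation (the identity) together with two extreme choices of \(A\) that pin down both measures completely.

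First I would unwind the definitions in the crisp setting. Since \(R\) is an equivalence relation and \(A\) is a crisp subset, both \(x\mapsto R(x,y)\) and \(A\) are \(\{0,1\}\)-valued, so by the boundary axioms on implicators and conjunctors, \(\mathcal{I}(R(x,y),A(x))\) and \(\mathcal{C}(R(x,y),A(x))\) are the indicator functions of \(A\cup (X\setminus[y]_R)\) and \(A\cap [y]_R\), respectively. A short check of Definition \ref{defn: ChoquetIntegral} shows that the Choquet integral of an indicator \(\mathbf{1}_B\) equals \(\mu(B)\), so the hypothesis becomes
\begin{equation*}
\mu_l\bigl(A\cup (X\setminus[y]_R)\bigr)\;\leq\;\mu_u\bigl(A\cap[y]_R\bigr),
\end{equation*}
for every equivalence relation \(R\) on \(X\), every crisp \(A\subseteq X\), and every \(y\in X\).

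Next I would specialize to the identity relation, for which \([y]_R=\{y\}\). Fixing an arbitrary \(y\in X\) and taking \(A=\{y\}\) collapses the inequality to \(1=\mu_l(X)\leq \mu_u(\{y\})\), so \(\mu_u(\{y\})=1\); monotonicity then gives \(\mu_u(B)=1\) for every nonempty \(B\), i.e.\ \(\mu_u=\mu_\exists\). Dually, taking \(A=\emptyset\) collapses the inequality to \(\mu_l(X\setminus\{y\})\leq\mu_u(\emptyset)=0\). Since every proper subset \(B\subsetneq X\) is contained in \(X\setminus\{y\}\) for any \(y\in X\setminus B\), monotonicity forces \(\mu_l(B)=0\), and combined with \(\mu_l(X)=1\) this yields \(\mu_l=\mu_\forall\).

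There is no substantial obstacle here: the reduction to indicator-Choquet integrals is immediate from the \(\{0,1\}\)-valuedness of the integrand, and the identity relation is strong enough to force the conclusion by a single case split on \(y\in A\) versus \(y\notin A\). The only care required is to note that the reductions of \(\mathcal{I}\) and \(\mathcal{C}\) on crisp inputs depend only on the four boundary values in their definitions, so the argument is independent of the specific choice of implicator and conjunctor.
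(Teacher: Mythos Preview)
Your proof is correct and follows essentially the same strategy as the paper's: in the crisp setting, reduce both approximations to evaluations \(\mu_l(\cdot)\) and \(\mu_u(\cdot)\) on explicit subsets (via the fact that the Choquet integral of an indicator \(\mathbf{1}_B\) is \(\mu(B)\)), and then specialize \(R\), \(A\), \(y\) to force the conclusion. The only difference is that the paper uses the two-block partition \(\{B,\comp B\}\) together with \(A\in\{\emptyset,X\}\) to obtain \(\mu_l(\comp B)=0\) and \(\mu_u(B)=1\) directly for every nonempty proper \(B\), whereas you use the identity relation to get these equalities first on singletons and their complements and then extend by monotonicity.
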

\begin{proof}
	Let \(B\subsetneq X\) be a non-empty subset, \(A\in \{\emptyset, X\}\),  \(\mu_l\) and \(\mu_u\) be two monotone measures satisfying Equation \ref{eq:propinc} and \(R\) the equivalence relation corresponding with the partition \(\{B , \comp B\}\) of \(X\). Then we have for \(y\in B\) and \(a\) the constant value of the set \(A\) that
	\begin{align*}
		I_{y}(x)&:=\mathcal{I}(R(x,y),A(x))=\left\{
		\begin{array}{ll}
			a,\;\; &x\in B \\
			1,\;\;\ &x \notin B
		\end{array}
		\right.,\\
		C_{y}(x)&:=\mathcal{C}(R(x,y),A(x))=\left\{
		\begin{array}{ll}
			a,\;\; &x\in B \\
			0,\;\;\ &x \notin B
		\end{array}
		\right.,
	\end{align*}
	since \(a\in \{0,1\}\) and implicators and conjunctors are extensions of their boolean counterparts. Calculating the lower and upper approximations using Definition \ref{defn: ChoquetIntegral} of the Choquet integral, we get:
	\begin{align*}
		(\underline{\text{apr}}_{R,\mu_l}A)(y)&=\int I_y(x)\diff\mu_l (x)=I_y(x^\ast_1)\mu_l(A^\ast_1)+\sum_{i=2}^n\mu(A^\ast_i)\cdot\left[I_y(x^\ast_i)-I_y(x^\ast_{i-1})\right]\\
		&= a + \mu_l(\comp B)\cdot (1-a),\\
		(\overline{\text{apr}}_{R,\mu_u}A)(y)&=\int C_y(x)\diff\mu_u (x)=\mu_u(B)\cdot a.
	\end{align*}
	Equation \ref{eq:propinc} gives us:
	\[a + \mu_l(\comp B)\cdot (1-a)\leq \mu_u(B)\cdot a,\]
	since this holds for an arbitrary non-empty \(B\subsetneq X\) and \(a\in \{0,1\}\) we have that \(\mu_l = \mu_\forall\) and \(\mu_u=\mu_\exists\).
\end{proof}
\begin{cor}
In particular, this shows that OWAFRS does not have the inclusion property (Equation \eqref{eq:propinc}), regardless of any non-trivial orness conditions.
\end{cor}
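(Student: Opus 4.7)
The plan is to reduce this corollary directly to the preceding proposition via the bridge between OWA operators and symmetric monotone measures supplied by Proposition \ref{ChoquetOWA}. An OWAFRS instance with weight vectors $\mathbf{w}_l$ and $\mathbf{w}_u$ is nothing but a CFRS instance with the two associated symmetric measures $\mu_l$ and $\mu_u$, so if such an OWAFRS satisfied $\underline{\text{apr}}_{R,\mathbf{w}_l}A\subseteq \overline{\text{apr}}_{R,\mathbf{w}_u}A$ for every equivalence relation $R$ and every crisp $A$, the preceding proposition would force $\mu_l=\mu_\forall$ and $\mu_u=\mu_\exists$.

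Next I would translate this back to the OWA side. The measures $\mu_\forall$ and $\mu_\exists$ are symmetric, since $\mu_\forall(A)$ depends only on whether $\abs{A}=\abs{X}$ and $\mu_\exists(A)$ only on whether $\abs{A}>0$, so Proposition \ref{ChoquetOWA} applies and the associated OWA weights are recovered from $w_i=\mu(A_i)-\mu(A_{i-1})$ with $\abs{A_i}=i$ and $A_0=\emptyset$. A short calculation then yields
\[\mathbf{w}_l=(0,0,\dots,0,1) \qquad \text{and} \qquad \mathbf{w}_u=(1,0,\dots,0,0),\]
i.e.\ the pure minimum and pure maximum OWA operators. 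Substituting into the orness formula \eqref{orness} gives $\orness(\mathbf{w}_l)=0$ and $\orness(\mathbf{w}_u)=1$, the two extreme points of the orness scale.

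The conclusion is that inside the OWAFRS family the inclusion property singles out precisely the classical Dubois--Prade fuzzy rough set, i.e.\ the undeformed min/max pair. Any non-trivial restriction on the orness---say a strict inequality $\orness(\mathbf{w}_l)>0$ or $\orness(\mathbf{w}_u)<1$, or more generally any condition forbidding $\mathbf{w}_l$ from being the pure minimum and $\mathbf{w}_u$ from being the pure maximum---is therefore incompatible with the inclusion. The only mildly technical step is verifying that $\mu_\forall$ and $\mu_\exists$ are symmetric so that the OWA translation is legitimate; this is immediate from the fact that both depend only on cardinality, so I do not expect any real obstacle.
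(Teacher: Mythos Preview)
Your proposal is correct and follows exactly the route the paper intends: the corollary is stated without proof because it is immediate from the preceding proposition together with the OWA--symmetric-measure correspondence of Proposition~\ref{ChoquetOWA}, and you spell this out carefully, including the explicit identification of $\mu_\forall,\mu_\exists$ with the min/max weight vectors and their extreme orness values.
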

The following propositions show that the monotonicity and duality properties are still retained by the CFRS model, it thus retains all the properties of OWAFRS that are stated in \cite{d2015comprehensive}.
\begin{prop}[Relation monotonicity]
	\label{Prop:MonotoneRelationChoquet}
	Let \(R_1\subseteq R_2\in \mathcal{F}(X\cross X)\), \(A\in \mathcal{F}(X)\) and \(\mu\) a monotone measure on \(X\), then the following holds:
	\[\underline{\text{apr}}_{R_1,\mu}A \supseteq \underline{\text{apr}}_{R_2,\mu}A\text{ and }\overline{\text{apr}}_{R_1,\mu}A \subseteq \overline{\text{apr}}_{R_2,\mu}A.\]
\end{prop}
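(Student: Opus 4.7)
My plan is to reduce the statement to two well-known facts: pointwise monotonicity of the integrands coming from the fuzzy logical connectives, and monotonicity of the Choquet integral with respect to the integrand.

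First, I would fix an arbitrary $y \in X$ and examine the functions $x \mapsto \mathcal{I}(R_i(x,y), A(x))$ and $x \mapsto \mathcal{C}(R_i(x,y), A(x))$ for $i \in \{1,2\}$. Since $R_1 \subseteq R_2$ means $R_1(x,y) \leq R_2(x,y)$ for all $x$, and since an implicator is decreasing in its first argument while a conjunctor is increasing in its first argument, I obtain the pointwise inequalities
\[
\mathcal{I}(R_1(x,y), A(x)) \geq \mathcal{I}(R_2(x,y), A(x)) \quad\text{and}\quad \mathcal{C}(R_1(x,y), A(x)) \leq \mathcal{C}(R_2(x,y), A(x))
\]
for every $x \in X$.

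Next, I would invoke monotonicity of the Choquet integral: if $f \leq g$ pointwise on $X$ and $\mu$ is a monotone measure, then $\int f \diff \mu \leq \int g \diff \mu$. This is a standard property, but since the paper has not explicitly stated it, I would briefly justify it from Definition \ref{defn: ChoquetIntegral}. One clean way is to use the equivalent representation
\[
\int f \diff\mu = \int_0^\infty \mu(\{x : f(x) \geq t\}) \diff t - \int_{-\infty}^0 \bigl(1 - \mu(\{x : f(x) \geq t\})\bigr) \diff t,
\]
since $f \leq g$ implies $\{f \geq t\} \subseteq \{g \geq t\}$ and hence $\mu(\{f \geq t\}) \leq \mu(\{g \geq t\})$ by monotonicity of $\mu$; alternatively one can argue directly from the sum-representation by comparing level sets. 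Applying this to the integrands above at each fixed $y$ yields
\[
(\underline{\text{apr}}_{R_1,\mu}A)(y) \geq (\underline{\text{apr}}_{R_2,\mu}A)(y) \quad\text{and}\quad (\overline{\text{apr}}_{R_1,\mu}A)(y) \leq (\overline{\text{apr}}_{R_2,\mu}A)(y),
\]
which is precisely the two fuzzy-set inclusions claimed.

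The only non-routine step is the monotonicity of the Choquet integral with respect to the integrand, so I would make sure to include a short justification rather than cite it implicitly. The rest is a direct consequence of the monotonicity properties of $\mathcal{I}$ and $\mathcal{C}$ that are built into their definitions, so the proof will be very short once that lemma is in place.
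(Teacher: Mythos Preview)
Your proposal is correct and follows exactly the same line as the paper's own proof, which simply states that the result ``directly follows from the decreasingness of the implicator in the first argument, the increasingness of the t-norm and the monotonicity of the Choquet integral.'' Your version is merely more detailed, in particular by spelling out a justification for the monotonicity of the Choquet integral that the paper takes for granted.
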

\begin{proof}
	Directly follows from the decreasingness of the implicator in the first argument, the increasingness of the t-norm and the monotonicity of the Choquet integral.
\end{proof}
\begin{prop}[Set monotonicity]
	Let \(R\in \mathcal{F}(X\cross X)\), \(A_1\subseteq A_2\in \mathcal{F}(X)\) and \(\mu\) a monotone measure on \(X\), then the following holds:
	\[\underline{\text{apr}}_{R,\mu}A_1 \subseteq \underline{\text{apr}}_{R,\mu}A_2\text{ and }\overline{\text{apr}}_{R,\mu}A_1 \subseteq \overline{\text{apr}}_{R,\mu}A_2.\]
\end{prop}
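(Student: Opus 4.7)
The proof will closely mirror the structure of the preceding \textbf{Relation monotonicity} proposition: reduce the fuzzy set inclusion $\underline{\text{apr}}_{R,\mu}A_1 \subseteq \underline{\text{apr}}_{R,\mu}A_2$ (respectively for upper approximation) to a pointwise inequality of Choquet integrals, and then deduce that pointwise inequality from a pointwise inequality of the integrands together with the monotonicity of the Choquet integral with respect to its integrand.

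The plan is to first unravel what $A_1 \subseteq A_2$ means, namely $A_1(x) \leq A_2(x)$ for every $x \in X$. For any fixed $y \in X$ and $x \in X$, I would then invoke the relevant monotonicity properties of the fuzzy connectives stated in the preliminaries: the implicator $\mathcal{I}$ is increasing in its second argument, so
\[
\mathcal{I}(R(x,y), A_1(x)) \leq \mathcal{I}(R(x,y), A_2(x)),
\]
and the conjunctor $\mathcal{C}$ is likewise increasing in both arguments (in particular in the second), so
\[
\mathcal{C}(R(x,y), A_1(x)) \leq \mathcal{C}(R(x,y), A_2(x)).
\]
Viewing both sides as functions of $x$ for fixed $y$, these are pointwise inequalities on $X$.

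Next I would apply the monotonicity of the Choquet integral with respect to the integrand: if $f \leq g$ pointwise on $X$, then $\int f \diff \mu \leq \int g \diff \mu$ for any monotone measure $\mu$. This is immediate from the definition (Definition \ref{defn: ChoquetIntegral}), since comonotone rearrangement preserves the inequality at each level set, and $\mu$ itself is monotone. Applied to the two inequalities above, this yields
\[
(\underline{\text{apr}}_{R,\mu}A_1)(y) \leq (\underline{\text{apr}}_{R,\mu}A_2)(y), \qquad (\overline{\text{apr}}_{R,\mu}A_1)(y) \leq (\overline{\text{apr}}_{R,\mu}A_2)(y),
\]
for every $y \in X$, which is precisely the required fuzzy inclusion in both cases.

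There is no real obstacle here: the argument is essentially a one-line consequence of the monotonicity of $\mathcal{I}$ and $\mathcal{C}$ in their second argument, combined with monotonicity of the Choquet integral. The only thing to be mildly careful about is to invoke the correct monotonicity of the implicator (second-argument increasing, not first-argument decreasing, which would be the right fact for Proposition \ref{Prop:MonotoneRelationChoquet}). The proof can therefore be stated in a couple of lines, analogous to the preceding one.
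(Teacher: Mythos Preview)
Your proposal is correct and matches the paper's own proof essentially verbatim: the paper simply states that the result ``directly follows from the increasingness of the implicator in the second argument, the increasingness of the t-norm and the monotonicity of the Choquet integral.'' Your version is slightly more detailed and even a bit more precise (you refer to the conjunctor rather than a t-norm, consistent with Definition~\ref{defn: Choquet-basedfuzzyRoughSets}), but the argument is identical.
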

\begin{proof}
	Directly follows from the increasingness of the implicator in the second argument, the increasingness of the t-norm and the monotonicity of the Choquet integral.
\end{proof}

\begin{prop}[Duality]
	Let \(\imp\) be an implicator, \(\mathcal{C}\) the conjunctor induced by \(\imp\) and the standard negator $\mathcal{N}_s$, \(R\in\mathcal{F}(X\times X)\), \(\mu\) a monotone measure on \(X\) and \(A\in\mathcal{F}(X)\). Then the following holds:
	\[\underline{\text{apr}}_{R,\mu}A=\comp_{\mathcal{N}_s}\left(\overline{\text{apr}}_{R,\overline{\mu}} (\comp_{\mathcal{N}_s}(A))\right).\]
\end{prop}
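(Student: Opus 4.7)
The plan is to verify the identity pointwise by unwinding the definitions from the outside in and applying Proposition \ref{lemmaDuality} together with the definition of the induced conjunctor. Fix $y \in X$; I will show that the value at $y$ of the right-hand side equals $(\underline{\text{apr}}_{R,\mu}A)(y)$.

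First, I would expand the inner upper approximation using Definition \ref{defn: Choquet-basedfuzzyRoughSets}:
\[
\left(\overline{\text{apr}}_{R,\overline{\mu}}(\comp_{\mathcal{N}_s}A)\right)(y) = \int \mathcal{C}(R(x,y), \mathcal{N}_s(A(x))) \diff \overline{\mu}(x).
\]
Apply $\mathcal{N}_s$ to both sides. Proposition \ref{lemmaDuality} tells us that $\int g \diff \overline{\mu} = \mathcal{N}_s(\int \mathcal{N}_s(g) \diff \mu)$; since $\mathcal{N}_s$ is an involution ($\mathcal{N}_s \circ \mathcal{N}_s = \mathrm{id}$), applying $\mathcal{N}_s$ to both sides yields
\[
\mathcal{N}_s\left(\int g \diff \overline{\mu}\right) = \int \mathcal{N}_s(g) \diff \mu.
\]
Taking $g(x) = \mathcal{C}(R(x,y), \mathcal{N}_s(A(x)))$ gives
\[
\mathcal{N}_s\left(\left(\overline{\text{apr}}_{R,\overline{\mu}}(\comp_{\mathcal{N}_s}A)\right)(y)\right) = \int \mathcal{N}_s\!\left(\mathcal{C}(R(x,y), \mathcal{N}_s(A(x)))\right) \diff \mu(x).
\]

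The second step is to simplify the integrand using that $\mathcal{C}$ is the conjunctor induced by $\imp$ and $\mathcal{N}_s$, i.e.\ $\mathcal{C}(a,b) = \mathcal{N}_s(\imp(a, \mathcal{N}_s(b)))$. Applying $\mathcal{N}_s$ and using involutivity once more,
\[
\mathcal{N}_s\!\left(\mathcal{C}(R(x,y), \mathcal{N}_s(A(x)))\right) = \imp(R(x,y), \mathcal{N}_s(\mathcal{N}_s(A(x)))) = \imp(R(x,y), A(x)).
\]
Substituting back and recognising the definition of the Choquet lower approximation,
\[
\mathcal{N}_s\left(\left(\overline{\text{apr}}_{R,\overline{\mu}}(\comp_{\mathcal{N}_s}A)\right)(y)\right) = \int \imp(R(x,y), A(x)) \diff \mu(x) = (\underline{\text{apr}}_{R,\mu}A)(y).
\]
Since this holds for every $y$, the claimed equality between fuzzy sets follows after one more application of $\mathcal{N}_s$ (to both sides) together with involutivity, which is exactly the statement with $\comp_{\mathcal{N}_s}$ on the right.

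This proof is mostly bookkeeping; the only subtle point, and the step I would check most carefully, is the double use of involutivity of $\mathcal{N}_s$ — once to move $\mathcal{N}_s$ inside the integral via Proposition \ref{lemmaDuality}, and once inside the induced-conjunctor identity to cancel the $\mathcal{N}_s$ acting on $A(x)$. If the negator were merely a general negator rather than the standard (strict, involutive) one, these cancellations would break, which is precisely why the statement is formulated with $\mathcal{N}_s$.
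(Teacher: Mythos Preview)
Your proof is correct and follows essentially the same route as the paper's: expand the upper approximation, use Proposition~\ref{lemmaDuality} together with the involutivity of $\mathcal{N}_s$ to bring $\mathcal{N}_s$ inside the integral, then invoke the defining identity of the induced conjunctor (plus involutivity again) to turn the integrand into $\imp(R(x,y),A(x))$. The only minor remark is that your final sentence about ``one more application of $\mathcal{N}_s$'' is superfluous --- once you have $\mathcal{N}_s\bigl((\overline{\text{apr}}_{R,\overline{\mu}}(\comp_{\mathcal{N}_s}A))(y)\bigr)=(\underline{\text{apr}}_{R,\mu}A)(y)$ you already have the pointwise equality, since $\comp_{\mathcal{N}_s}(B)(y)=\mathcal{N}_s(B(y))$ by definition.
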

\begin{proof}
	Using Proposition \ref{lemmaDuality}, the involutivity of \(\mathcal{N}_s\) and the definition of induced conjunctor we get:
	\begin{align*}
	\comp_{\mathcal{N}_s}\left(\overline{\text{apr}}_{R,\overline{\mu}} (\comp_{\mathcal{N}_s}(A))\right)(y)&=\mathcal{N}_s\left(\int
		\mathcal{C}(R(x,y),\mathcal{N}_s(A(x)))\diff\overline{\mu}(x)\right)\\
		&=\int\mathcal{N}_s\left[
		\mathcal{C}(R(x,y),\mathcal{N}_s(A(x)))\right]\diff\mu(x)\\
		&= \int\mathcal{N}_s\left[
		\mathcal{C}(R(x,y),\mathcal{N}_s(A(x)))\right]\diff\mu(x)\\
		&= \int\mathcal{N}_s\left[\mathcal{N}_s\left(
		\mathcal{I}(R(x,y),\mathcal{N}_s(\mathcal{N}_s(A(x))))\right)\right]\diff\mu(x)\\
		&=\int \mathcal{I}(R(x,y),A(x))\diff\mu(x)\\
		&=(\underline{\text{apr}}_{R,\mu}A)(y).
	\end{align*}
\end{proof}

\section{Application to classification}
\label{section:application}
\subsection{Classification using fuzzy rough sets}
The goal of classification is to predict the class of an instance, given a set of examples. More specifically, the set of examples is given in the form of a decision system \((X,\mathcal{A}\cup\{d\})\). For classification we assume that \(d\) is a categorical attribute; the attributes of \(\mathcal{A}\) can either be categorical of numerical. The problem of classification is then to predict for a new instance \(x\notin X\), for which the evaluations of the conditional attributes are given, the value of \(d(x)\), based on the decision system \((X,\mathcal{A}\cup\{d\})\). A simple algorithm for classification \cite{vluymans2019dealing}, using fuzzy rough sets, is to classify a test instance to the decision class for which it has the greatest membership to the lower approximation of that class. To calculate these lower approximations, we need a fuzzy relation \(R\in\mathcal{F}(X\times X)\) describing the similarity between instances based on the conditional attributes. When all the conditional attributes are numerical the following relation could be used:
\begin{equation}
	\label{indiscernibilityRelation}
R(x,y)=\frac{1}{|\mathcal{A}|}\sum_{a\in\mathcal{A}}R_a(x,y),
\end{equation}
where
\[R_a(x,y)=\max\left(1-\frac{\abs{a(y)-a(x)}}{\sigma_a},0\right),\]
and \(\sigma_a\) denotes the standard deviation of \(a\).\\
Let \(C\) be a decision class, since \(C\) is a crisp set, the membership degrees \(C(y)\) can only take values in \(\{0,1\}\). Using this fact, the antitonicity of an implicator in the first argument and \(\imp(1,1)=1\), we can rewrite the membership of an element \(x\) in the lower approximation as follows:
\begin{align*}
	(\underline{apr}_{R,\mu_l}C)(x)&= Agg\left(\underbrace{\imp(R(x,y),0)}_{y\notin A},\underbrace{\imp(R(x,y),1)}_{y\in A}\right)\\
	&=Agg\left(\underbrace{\imp(R(x,y),0)}_{y\notin A},\underbrace{1,\dots,1}_{y\in A}\right),
\end{align*}
where \(Agg\) is the aggregation operator corresponding to the monotone measure \(\mu_l\). Discarding the trailing one values from the previous equation, as was also done in \cite{vluymans2019weight}, gives us the following simpler definition of the lower approximation:
\[\underline{C}(x)=\underset{y\notin C}{Agg}\left(\imp(R(x,y),0)\right),\]
which in the case of the \L ukasiewicz, Kleene-Dienes and Reichenbach implicators reduces to
\begin{equation}
	\label{lowerApproximation}
	\underline{C}(x)=\underset{y\notin C}{Agg}(1-R(x,y)).
\end{equation}
\subsection{Experimental evaluation}
In this subsection we evaluate the measures introduced in Subsection \ref{section:measures}, when applied to classification.

\subsubsection{Setup} 
For ease of computation, we will make use of the lower approximation defined in Equation \eqref{lowerApproximation}, together with the indiscernibility relation from Equation \eqref{indiscernibilityRelation}. The aggregation operators we evaluate are:
\begin{itemize}
	\item fuzzy outlier removal, described by Equation \eqref{fuzzy removal}, using the minimum as the t-norm ($\mathcal{T}=\min$), denoted by FR,
	\item the Choquet integral with the WOWA measure from Equation \eqref{WOWA} using \(Q=Q_{add}\), denoted by WOWA,
	\item the Choquet integral with the measure from Equation \eqref{TSmeasure}, \(Q=Q_{add}\) (Example \ref{exmp: additive quantifier}) and \(t=0.3\), denoted by TS.
\end{itemize}
The baseline aggregation operators are minimum, average and OWA with \(Q=Q_{add}\) and each of these three are considered with outliers (denoted by Min, Avg and OWA) and without outliers (denoted by Mino, Avgo and OWAo), resulting in \(6\) different baseline operators. For \(O(x)\) we use the scores obtained from the normalised Local Outlier Factor (LOF)  algorithm \cite{normalizedLOF} which is implemented in the Pyod library \cite{zhao2019pyod}. The normalized LOF score of an instance \(x\) is calculated by fitting the LOF algorithm to all the members of the class that \(x\) is part of, and then using this model to calculate the LOF score of \(x\). FR and WOWA use these raw scores, while for Mino, Avgo and OWAo, the \(c*n\) samples with the highest outlier score are labelled as outliers, where \(n\) is the number of samples and the \emph{contamination} parameter \(c\) is set to \(0.1\). Except for these 9 algorithms, we will also test a combination of all of them. This combination algorithm, denoted by COMB, performs Leave-One-Out Cross-Validation (LOOCV) for each of these 9 algorithms on the training set and then chooses the best aggregation operator, evaluated using balanced accuracy
\[\frac{\text{true positive rate}+\text{true negative rate}}{2},\]
 to use on the test set. When multiple aggregation operators perform equally well, it chooses one of these randomly.  We evaluate the performance on 18 two-class datasets from the UCI-repository \cite{Dua:2019} by means of stratified 5-fold cross-validation. All of the datasets only have numerical features. The balanced accuracy
 is used as the performance measure.
\begin{table}[H]
	\begin{center}
		\begin{tabular}{l| c c l || l | c c l}
			Name & \# Feat. & \# Inst. & IR & Name & \# Feat. & \# Inst. & IR \\
			\hline
			appendicitis & 7 & 106 & 4.05 & pop-failures & 18 & 540 & 10.74 \\
			\hline
			banknote & 4 & 1372 & 1.25 & somerville & 6 & 143 & 1.17 \\
			\hline
			biodeg & 41 & 1055 & 1.96 & sonar & 60 & 208 & 1.14 \\
			\hline
			coimbra & 9 & 116 & 1.23 & spectf & 44 & 267 & 3.85 \\
			\hline
			debrecen & 19 & 1151 & 1.13 & sportsarticles & 59 & 1000 & 1.74 \\
			\hline
			divorce & 54 & 170 & 1.02 & transfusion & 4 & 748 & 3.20 \\
			\hline
			haberman & 3 & 306 & 2.78 & wdbc & 30 & 569 & 1.68 \\
			\hline
			ilpd & 10 & 579 & 2.51 & wisconsin & 9 & 683 & 1.86\\
			\hline
			ionosphere & 34 & 351 & 1.79 & wpbc & 32 & 138 & 3.93 \\
		\end{tabular}
	\end{center}
	\caption{Description of the selected datasets (\# Feat.\ = number of features, \# Inst.\ = number of instances, IR = imbalance ratio = number of instances in the majority class divided by the number of instances in the minority class).}
	\label{dataset_description}
\end{table}
\subsubsection{Results and discussion}
  The results of the cross-validation can be seen in Table \ref{resultsBenchmark} together with the mean and median over all datasets. A first impression that we get from these averages and medians is that the OWA-based operators (OWA, OWAo and WOWA) outperform all the other operators. Furthermore, the results indicate that the combination of all the algorithms (COMB) performs better than the individual ones. From this we can deduce that the best aggregation operator for the training set, also performs well on the test set. For the minimum-based (Min, Mino and FR) operators we get the impression that FR performs best, and that fully removing the ``outliers'' (Mino) is, on average, not a good decision. This last fact could be explained by the fact that some of the datasets may not really contain ``outliers'', and thus Mino is throwing away valuable information, which because of the sensitivity of the minimum to individual instances can alter the results considerably. \\
Table \ref{usedAlgosCOMB} displays which algorithms were used in the COMB algorithm for each dataset. The aggregation operators that were used most are Min, WOWA, and to a lesser extent FR, OWAo and TS. The rest (Mino, Avg, Avgo and OWA) were almost never used. \\
To discern if one of these methods really outperforms another method consistently and significantly we perform a two-sided Wilcoxon signed ranks test. The results of this are displayed in Figure \ref{heatmapBench}. First of all we observe that COMB is significantly different from Min, Mino, FR, Avg, Avgo and TS (\(p\leq 0.05\)). From the rank sums we can tell that COMB outperforms all of them. Secondly, we notice that Avg, Avgo and TS are all significantly (\(p<0.1\)) different from OWA, OWAo and WOWA. Looking at the rank sums tells us that Avg, Avgo and TS perform consistently worse than the OWA-based methods. Out of all the average-based methods and TS (Avg, Avgo and TS), the Wilcoxon test indicates that TS performs the best (\(p<0.02\)). For the minimum-based aggregation operators we can't really say anything about how they compare with the other operators. What the Wilcoxon test does show is that FR consistently outperforms Mino (\(p=0.042\)). From this we may conclude that when we choose to take into account ``outliers'', via an outlier detection algorithm, it may be preferable to ``fuzzy remove'' them (FR), instead of discarding them entirely (Mino). This also suggests that even when ``outliers'' are present in the dataset, FR still performs better. A possible explanation for the fact that FR outperforms Mino (on average) might be that when a dataset does not contain outliers, all the outlier scores are all close to \(0\), but there still need to be \(n*0.1\) outlier labels so Mino would throw away instances randomly, whereas FR would not, because it can ``see'' that all of the outlier scores are roughly zero.
\begin{landscape}
	\begin{table}[t!]
		\begin{center}
			\begin{tabular}{l|l|l|l|l|l|l|l|l|l|l}
				Dataset & Min & Mino & FR & Avg & Avgo & TS & OWA & OWAo & WOWA & COMB \\
				\hline
				\hline
				appendicitis & 0.739 & 0.704 & 0.773 & 0.768 & 0.768 & 0.793 & 0.768 & 0.774 & 0.774 & 0.754 \\
				\hline
				banknote & 0.999 & 0.991 & 0.990 & 0.872 & 0.872 & 0.908 & 0.916 & 0.912 & 0.906  & 0.999\\
				\hline
				biodeg & 0.817 & 0.815 & 0.820 & 0.690 & 0.707 & 0.739 & 0.753 & 0.756 & 0.763 & 0.820  \\
				\hline
				coimbra & 0.633 & 0.634 & 0.633 & 0.596 & 0.595 & 0.610 & 0.633 & 0.636 & 0.620 & 0.615 \\
				\hline
				debrecen & 0.616 & 0.619 & 0.626 & 0.576 & 0.580 & 0.616 & 0.619 & 0.622 & 0.629 & 0.630\\
				\hline
				divorce & 0.976 & 0.976 & 0.976 & 0.976 & 0.976 & 0.976 & 0.976 & 0.976 & 0.976 & 0.976 \\
				\hline
				haberman & 0.540 & 0.581 & 0.583 & 0.625 & 0.628 & 0.602 & 0.622 & 0.634 & 0.629 & 0.634\\
				\hline
				ilpd & 0.620 & 0.613 & 0.609 & 0.607 & 0.615 & 0.624 & 0.639 & 0.649 & 0.656 & 0.644 \\
				\hline
				ionosphere & 0.902 & 0.900 & 0.897 & 0.837 & 0.854 & 0.863 & 0.859 & 0.867 & 0.850 & 0.902 \\
				\hline
				pop-failures & 0.616 & 0.628 & 0.639 & 0.797 & 0.778 & 0.802 & 0.788 & 0.783 & 0.801 & 0.802 \\
				\hline
				somerville & 0.566 & 0.567 & 0.573 & 0.623 & 0.640 & 0.612 & 0.624 & 0.655 & 0.613 & 0.583\\
				\hline
				sonar & 0.840 & 0.832 & 0.824 & 0.751 & 0.747 & 0.772 & 0.806 & 0.792 & 0.789 & 0.780  \\
				\hline
				spectf & 0.619 & 0.617 & 0.621 & 0.580 & 0.590 & 0.606 & 0.620 & 0.632 & 0.663 & 0.637  \\
				\hline
				sportsarticles & 0.729 & 0.744 & 0.745 & 0.775 & 0.775 & 0.779 & 0.782 & 0.789 & 0.783& 0.789  \\
				\hline
				transfusion & 0.594 & 0.562 & 0.571 & 0.659 & 0.649 & 0.663 & 0.672 & 0.640 & 0.641& 0.652 \\
				\hline
				wdbc & 0.942 & 0.943 & 0.943 & 0.911 & 0.910 & 0.915 & 0.923 & 0.927 & 0.930& 0.943  \\
				\hline
				wisconsin & 0.963 & 0.958 & 0.958 & 0.894 & 0.892 & 0.922 & 0.926 & 0.926 & 0.926 & 0.967 \\
				\hline
				wpbc & 0.511 & 0.519 & 0.519 & 0.630 & 0.628 & 0.609 & 0.616 & 0.625 & 0.620 & 0.605 \\
				\hline
				\hline
				Average & 0.735 & 0.733 & 0.739 & 0.731 & 0.734 & 0.745 & 0.752 & 0.755 & 0.754 & 0.763\\
				\hline
				Median & 0.681 & 0.669 & 0.692 & 0.721 & 0.727 & 0.756 & 0.760 & 0.765 & 0.768 & 0.767 \\
			\end{tabular}
		\end{center}
		\caption{Mean balanced accuracy results from performing 5-fold cross-validation.}
		\label{resultsBenchmark}
	\end{table}
\end{landscape}
\begin{table}[htp]
	\begin{center}
		\begin{tabular}{l||c|c|c|c|c|c|c|c|c}
			Dataset & Min & Mino & FR & Avg & Avgo & TS & OWA & OWAo & WOWA \\
			\hline
			appendicitis & 0 & 0 & 1 & 1 & 0 & 0 & 0 & 1 & 2 \\
			\hline
			banknote & 5 & 0 & 0 & 0 & 0 & 0 & 0 & 0 & 0 \\
			\hline
			biodeg & 0 & 0 & 5 & 0 & 0 & 0 & 0 & 0 & 0 \\
			\hline
			coimbra & 0 & 0 & 0 & 0 & 0 & 3 & 0 & 0 & 2 \\
			\hline
			debrecen & 0 & 0 & 1 & 0 & 0 & 0 & 0 & 0 & 4 \\
			\hline
			divorce & 5 & 0 & 0 & 0 & 0 & 0 & 0 & 0 & 0 \\
			\hline
			haberman & 0 & 0 & 0 & 0 & 1 & 0 & 1 & 1 & 2 \\
			\hline
			ilpd & 0 & 0 & 0 & 0 & 0 & 0 & 1 & 2 & 2 \\
			\hline
			ionosphere & 4 & 0 & 1 & 0 & 0 & 0 & 0 & 0 & 0 \\
			\hline
			pop-failures & 0 & 0 & 0 & 0 & 0 & 0 & 0 & 2 & 3 \\
			\hline
			somerville & 0 & 0 & 1 & 1 & 1 & 0 & 0 & 1 & 1 \\
			\hline
			sonar & 2 & 0 & 0 & 0 & 0 & 0 & 1 & 1 & 1 \\
			\hline
			spectf & 1 & 0 & 0 & 0 & 0 & 0 & 0 & 0 & 4 \\
			\hline
			sportsarticles & 0 & 0 & 0 & 0 & 0 & 0 & 0 & 5 & 0 \\
			\hline
			transfusion & 0 & 0 & 0 & 1 & 0 & 3 & 0 & 0 & 1 \\
			\hline
			wdbc & 1 & 1 & 3 & 0 & 0 & 0 & 0 & 0 & 0 \\
			\hline
			wisconsin & 4 & 0 & 1 & 0 & 0 & 0 & 0 & 0 & 0 \\
			\hline
			wpbc & 0 & 0 & 0 & 1 & 1 & 2 & 1 & 0 & 0 \\
			\hline
			\hline
			sum & 22 & 1 & 13 & 4 &  3 & 8 & 4 & 11 & 22\\
		\end{tabular}
	\end{center}
	\caption{Number of folds for which each aggregation function is used in the COMB algorithm.}
	\label{usedAlgosCOMB}
\end{table}
\section{Conclusion and future work}
\label{section: conclusion and future work}
From a theoretical perspective, Choquet-based fuzzy rough sets are a natural extension of OWA-based fuzzy rough sets as they preserve all of their known theoretical properties. \\
From a classification perspective, Choquet-based fuzzy rough sets yield more flexibility than OWA-based fuzzy rough sets. We have seen that they allow to combine fuzzy rough set models and outlier detection algorithms more smoothly, through fuzzy removal or weighted ordered weighted averaging. They do so by directly using the outlier scores instead of having to discretize them into outlier labels. The benefit of this is that we don't have to optimize the \emph{contamination} parameter that is used to create the outlier labels, and that this might yield a higher classification accuracy. Moreover, we have seen that choosing a good measure for a particular dataset can be done by selecting the one that performs the best on the training set.\\ 
The algorithms based on the measures introduced in this paper can still be improved, for example, by tuning the weight vector for the WOWA operator, as well by experimenting with different types of outlier detection algorithms and transformations of the outlier scores.\\
The measures proposed in this work are described by at most \(2n\) parameters, where \(n\) is the length of the vector that needs to be aggregated, but an arbitrary monotone measure depends on \(2^n-2\) parameters. Therefore, we still haven't used the full generality of the Choquet integral, so one could search for measures that depend on higher dimensional characteristics of the training set, instead of the one-dimensional outlier scores. 
\section*{Acknowledgment}
The research reported in this paper was conducted with the financial support
of the Odysseus programme of the Research Foundation – Flanders (FWO). 
	\begin{figure}[htp]
	\centering
	\includegraphics[width=14cm]{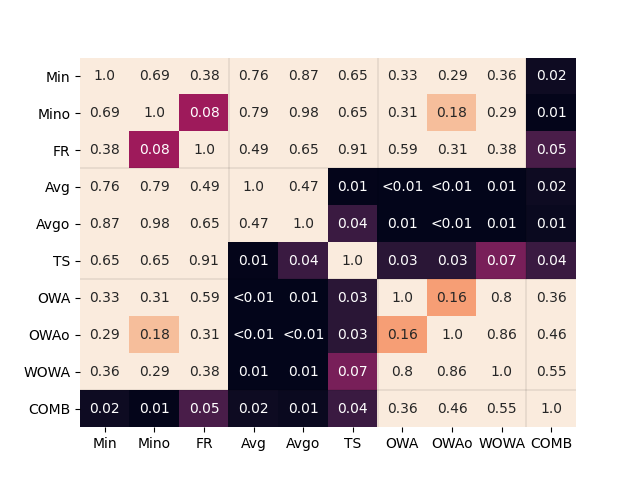}
	\caption{Heatmap of the \(p\)-values from the pairwise two-sided Wilcoxon signed rank test.}
	\label{heatmapBench}
\end{figure}
\newpage
\bibliography{mybibfile}

\end{document}